\begin{document}
\title{Weakly synchronous systems with three machines are Turing powerful}
%
%
\author{Cinzia Di Giusto \and Davide Ferré \and  Etienne Lozes \and Nicolas Nisse}%
\date{}
%
%
\maketitle              

\begin{abstract}
    Communicating finite-state machines (CFMs) are a Turing powerful model of
    asynchronous message-passing distributed systems. In weakly synchronous systems, processes communicate through phases in which messages are first sent and then received, for each process. Such systems enjoy  a limited
    form of synchronization, and for some communication models, this restriction is enough 
    to make the reachability problem decidable. In particular, we explore
    the intriguing case of p2p (FIFO) communication, for which the reachability problem
    is known to be undecidable for four processes, but decidable for two. We show that the configuration reachability problem for weakly synchronous systems of three processes is undecidable.
    This result is heavily inspired by our study on the treewidth of the Message Sequence Charts (MSCs) that might be generated by such systems. In this sense, the main contribution of this work is a weakly synchronous system with three processes that generates MSCs of arbitrarily large treewidth.

\end{abstract}

\section{Introduction}

Systems of communicating finite-state machines (CFMs) are a simple, yet expressive, model of
asynchronous message-passing distributed systems. In this model, each 
machine performs a sequence of  send and receive actions, where a send action can be matched by a receive
action of another machine. For instance, the system in Fig.~\ref{fig:system-exampleNew} (left),
models a protocol  between three processes $a$, $b$, and $r$.

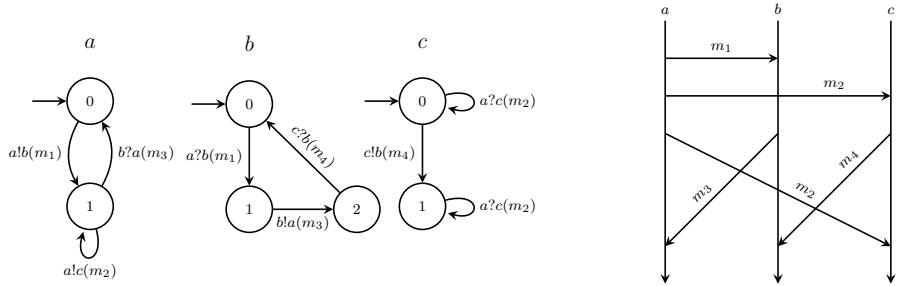
\begin{figure} [t]
\centering
    \begin{subfigure}[b]{0.7\textwidth}
         \raggedright
         \begin{tikzpicture}[every state/.style={text=black, scale =1}, semithick,
        font={\fontsize{8pt}{12}\selectfont},]
    \node (s) {\large $a$};
    \node[state, initial, below=0.3cm of s] (s0) {$0$};
    \node[state, below of=s0] (s1) {$1$};
    
    \node[right=1.8cm of s] (c) {\large $b$};
    \node[state, initial, below=0.3cm of c] (c0) {$0$};
    \node[state, below of=c0] (c1) {$1$};	
    \node[state, right=0.8cm of c1] (c2) {$2$};

    \node[right=2cm of c] (d) {\large $c$};
    \node[state, initial, below=0.3cm of d] (d0) {0};
    \node[state, below of=d0] (d1) {$1$};
    
    \tikzset{->,}
    \draw
    (s0) edge[bend right, left] node{$\send{a}{b}{m_1}$} (s1)
    (s1) edge[loop below] node{$\send{a}{c}{m_2}$} (s1)
    (s1) edge[bend right, right] node{$\rec{b}{a}{m_3}$} (s0)
    
    (c0) edge[left] node{$\rec{a}{b}{m_1}$} (c1)
    (c1) edge[below] node{$\send{b}{a}{m_3}$} (c2)
    (c2) edge[above] node [sloped] {$\rec{c}{b}{m_4}$} (c0)
    
    (d0) edge[loop right, right] node{$\rec{a}{c}{m_2}$} (d0)
    (d1) edge[loop right, right] node{$\rec{a}{c}{m_2}$} (d1)
    (d0) edge[left] node{$\send{c}{b}{m_4}$} (d1);
    
\end{tikzpicture}    
         \label{fig:system-example}
     \end{subfigure}
     \begin{subfigure}[b]{0.28\textwidth}
         \centering
         \begin{tikzpicture}[every state/.style={text=black, scale =0.75}, semithick,
        font={\fontsize{8pt}{12}\selectfont}]
	\draw[->] (0,0) [above] node {$a$} -- (0, -3.5);
	\draw[->] (1.5,0) [above] node {$b$} -- (1.5, -3.5);
	\draw[->] (3,0) [above] node {$c$} -- (3, -3.5);
	\draw[->] (0, -0.5) -- node [above] {$m_1$} (1.5, -0.5);
	\draw[->] (0, -1) -- node [above, pos=0.75] {$m_2$} (3, -1);
	\draw[->] (0, -1.5) -- node [above, sloped, pos=0.6] {$m_2$} (3, -3);
	\draw[->] (1.5, -1.5) -- node [above, sloped, pos=0.6] {$m_3$} (0, -3);
	\draw[->] (3, -1.5) -- node [above, sloped, pos=0.3] {$m_4$} (1.5, -3);
\end{tikzpicture}
         \label{fig:msc-example}
     \end{subfigure}
     \caption{Example of a system of $3$ CFMs (left) and of an MSC generated by it (right). $a!b(m_1)$ (resp., $b?a(m_1)$) denotes the sending (reception) of message $m_1$ from (by) process $a$ to (from) process $b$.}
%
\label{fig:system-exampleNew}
\end{figure}
%


A computation of such a system can be represented
graphically by a Message Sequence Chart (MSC), a simplified version of the 
ITU recommendation~\cite{messagesequencecharts}. Each machine of the system
has its own ``timeline'' on the MSC, where  actions are listed in the order
in which they are  executed, and message arrows link a send action to
its matching receive action. For instance, the MSC of Fig.~\ref{fig:system-exampleNew} (right)
represents one of the many computations of the system in Fig.~\ref{fig:system-exampleNew} (left). 
The set of all MSCs
that the system may generate is determined both by the machines, since 
the sequence of actions of each timeline must be a sequence of action in the corresponding
CFM, and by the ``transport layer'' or ``communication model'' employed by the machines. 
Roughly
speaking, a communication model is  a class of MSCs that are considered
``realizable'' within that model of communications. For instance, 
for rendezvous synchronization,
an MSC is considered to be realizable with synchronous communication 
if the only path between a sending and its matching receipt is
the direct one through the message arrow that relates them. Among the various
communication models that have been considered, we can cite  p2p (or FIFO) model,
where each ordered pair of machines defines a dedicated FIFO queue; causal ordering (CO),
where a message cannot overtake the messages that were sent causally before it;
the mailbox model, where each machine holds a unique FIFO queue for all
incoming messages; the bag (or simply asynchronous) model, where a message 
can overtake any other message (see~\cite{DBLP:journals/tcs/BasuB16,DBLP:journals/dc/Charron-BostMT96,DBLP:journals/pacmpl/GiustoFLL23} 
for various presentations of these communication models).

The configuration reachability problem for a system of CFMs consists in checking whether a control state, together with a given content of the queues, is reachable from the initial state. This problem is decidable for synchronous communication, as the state space of the system is
finite, and also for bag communication, by reduction to Petri nets~\cite{Mayr1981AnAF}. For
other communication models, as soon as two machines are allowed to exchange messages through
two FIFO queues, reachability becomes undecidable~\cite{DBLP:journals/jacm/BrandZ83}. Due to this strong limitation,
there has been a wealth of work that tried to recover decidability of  reachability by considering
systems of CFMs that are ``almost synchronous''.

In weakly synchronous systems, processes communicate through phases in which messages are first sent and then received, for each process; graphically, the MSCs of such systems
are the concatenation of smaller, independent MSCs, within which no send happens after a receive.
For instance, the MSC in Fig.~\ref{fig:system-exampleNew} (right) is weakly synchronous, as it is
the concatenation of three ``blocks'' (namely $\{m_1\}$, $\{m_2\}$, and $\{m_2,m_3,m_4\}$), within which all sends of a given machine 
happen before all the receives of this same machine.
It is known that  reachability  is decidable for mailbox weakly synchronous 
systems~\cite{BolligGFLLS21-long}, whereas it is undecidable for either p2p or CO weakly synchronous systems
with at least four machines. On the other hand,  reachability  is decidable for two machines (since any p2p MSC with 
two machines  is  also mailbox). In this work, 
we deal with weakly synchronous systems with three machines, and conclude that
 reachability  is undecidable for these systems.
Our result is based on a study of the unboundedness of the treewidth for MSCs that may be generated by these systems.
The first contribution of this work is a weakly synchronous system with 
only three machines that is ``treewidth universal'', in the sense that it
may generate MSCs of arbitrarily large treewidth. The second
contribution, strongly inspired by 
the  treewidth universal system, is showing that weakly synchronous systems with three processes are Turing powerful. 
To do so, we establish a one-to-one correspondence between the
computations of a FIFO automaton (a finite state machine that may push and
pop from a FIFO queue, which is known to be a Turing powerful computational
model) on the one hand, and a subset of the MSCs of the treewidth universal
system on the other hand.

\vspace{0.5em}
\noindent{\bf Related work.}
Beyond weakly synchronous systems, several similar notions have been considered to try to capture the intuition of an ``almost synchronous'' system. Reachability  of existentially bounded
systems~\cite{DBLP:conf/fossacs/LohreyM02,GKM07} is decidable for FIFO, CO, p2p, or bag
communications. Synchronizable systems~\cite{DBLP:conf/www/BasuB11} were an attempt to
define a class of systems with good decidability properties, however  reachability 
for such systems with FIFO communications is undecidable~\cite{DBLP:conf/icalp/FinkelL17}. The status of
 reachability for
$k$-stable systems~\cite{DBLP:journals/fmsd/AkrounS18} is unknown. Finally,  reachability 
for 
$k$-synchronous systems~\cite{DBLP:conf/cav/BouajjaniEJQ18} is decidable
for FIFO, CO, p2p, or bag communications.

Another form of under-approximation of the full behaviour of a system of
CFMs is the bounded context-switch reachability problem,
which is known to be decidable for systems of CFMs, even with a controlled form of
function call~\cite{HeussnerLMS10,DBLP:conf/tacas/TorreMP08}.

Finally, weak synchronisability share some similarities with reversal-bounded
counter machines~\cite{reversal-bounded-revisited,ibarra1978reversal}: in the context
of bag communications, a send is a counter increment, a receive a decrement, and weak
synchronisability is a form of bounding the number of reversals of increment
and decrement phases.

\vspace{0.5em}
\noindent{\bf Outline.} Section~\ref{sec:prelim} introduces the necessary terminology. 
Section~\ref{sec:tw_ws_pp} presents the weakly synchronous system with three machines
that may generate MSCs of arbitrarily large treewidth. Then,
Section~\ref{sec:reach} discusses the undecidability of the configuration reachability problem
for weakly-synchronous systems with three machines. 
Finally, Section~\ref{sec:concl} concludes with some final remarks. The Appendix contains  proofs and additional material.

\section{MSCs and communicating automata}\label{sec:prelim}

We recall  here  concepts and definitions related to MSCs and communicating automata. 
Assume a finite set of processes $\Procs$ and a finite set of messages $\Msg$.
%
A send action is of the form $\sact{p}{q}{\msg}$
where $p,q \in \Procs$ and $\msg \in \Msg$; it is executed by $p$ and sends message $\msg$ to process $q$.
The corresponding receive action, executed by $q$, is
$\ract{p}{q}{\msg}$.
%
Let
$\pqsAct{p}{q} = \{\sact{p}{q}{\msg} \mid \msg \in \Msg\}$ and
$\pqrAct{p}{q} = \{\ract{p}{q}{\msg} \mid \msg \in \Msg\}$.
For $p \in \Procs$, we set
$\psAct{p} = \{\sact{p}{q}{\msg} \mid q \in \Procs \setminus \{p\}$ and $\msg \in \Msg\}$, etc.
Moreover, $\pAct{p} = \psAct{p} \cup \qrAct{p} \cup \{ \varepsilon \}$ will denote the set of all actions that are
executed by $p$.
Finally, $\Act = \bigcup_{p \in \Procs} \pAct{p}$
is the set of all the actions.

\begin{definition}[\pp \MSCs]\label{def:msc}
A \emph{(\pp) \MSCs} $\msc$  over $\Procs$ and $\Msg$ is a tuple $\msc = (\Events,\procrel,\lhd,\lambda)$
where $\Events$ is a finite (possibly empty) set of \emph{events}
and $\lambda: \Events \to \Act$ is a labeling function.
For $p \in \Procs$, let $\Events_p = \{e \in \Events \mid \lambda(e) \in \pAct{p}\}$ be the set of events
that are executed by $p$.
$\procrel$ (the \emph{process relation}) is the disjoint union $\bigcup_{p \in \Procs} \procrel_p$
of relations ${\procrel_p} \subseteq \Events_p \times \Events_p$ such that
$\procrel_p$ is the direct successor relation of a total order on $\Events_p$.
${\lhd} \subseteq \Events \times \Events$ (the \emph{message relation}) satisfies the following:
\begin{itemize}\itemsep=0.5ex
\item[(1)] for every pair $(s,r) \in {\lhd}$, there is a send action $\sact{p}{q}{\msg} \in \Act$ such that
$\lambda(s) = \sact{p}{q}{\msg}$, $\lambda(r) = \ract{p}{q}{\msg}$, and $p\neq q$;
\item[(2)] for all $r \in \Events$ with $\lambda(r)=\ract{p}{q}{\msg}$, there is a unique $s \in \Events$ such that $s \lhd r$;
\item[(3)] letting ${\le}_\msc = ({\procrel} \cup {\lhd})^\ast$,
we require that $\le_\msc$ is a partial order;
\item[(4)] for every $s_1 \in \Events$ and pair $(s_2,r_2) \in {\lhd}$
with $\lambda(s_1)=\sact{p}{q}{\msg_1}$ and $\lambda(s_2)=\sact{p}{q}{\msg_2}$,
if
$s_1\procrel_p^{+} s_2$, 
then there exists $r_1$ such that $(s_1,r_1)\in \lhd$ and $r_1\procrel_q^{+} r_2$.
\end{itemize}
\end{definition}

Condition (1) above ensures that message arrows relate a send
event to a receive event on a distinct machine.
By Condition (2), every receive event has a matching send event.
Note that, however, there may be unmatched send events in an MSC. An MSC
is called \emph{orphan free} if all send events are matched. 
Condition (3) ensures that there exists at least one scheduling of all events
such that each receive event happens after its matching send event.
Condition (4) captures the p2p communication model: a message cannot overtake another
message that has the same sender and same receiver as itself.

Let $\msc = (\Events,\procrel,\lhd,\lambda)$ be an MSC, then 
$\SendEv{\msc} = \{e \in \Events \mid \lambda(e)$ is a send
action$\}$,
$\RecEv{\msc} = \{e \in \Events \mid \lambda(e)$ is a receive
action$\}$,
$\Matched{\msc} = \{e \in \Events \mid$ there is $f \in \Events$
such that $e \lhd f\}$, and
$\Unm{\msc} = \{e \in \Events \mid \lambda(e)$ is a send
action and there is no $f \in \Events$ such that $e \lhd f\}$.
We do not distinguish isomorphic MSCs. 
Let $E \subseteq \Events$ such that $E$ is ${\le_\msc}$-\emph{downward-closed}, i.e,
for all $(e,f) \in {\le_\msc}$ such that $f \in E$, we also have $e \in E$.
Then the MSC $\msc' = (E,\procrel,\lhd,\lambda)$ obtained by restriction to $E$
is called a \emph{prefix} of $\msc$.
If $\msc_1 = (\Events_1,\procrel_1,\lhd_1,\lambda_1)$ and
$\msc_2 = (\Events_2,\procrel_2,\lhd_2,\lambda_2)$ are two MSCs,
their \emph{concatenation} $\msc_1 \cdot \msc_2 = (\Events,\procrel,\lhd,\lambda)$ is 
as expected: $\Events$ is the disjoint union of $\Events_1$ and $\Events_2$,
${\lhd}  = {\lhd_1} \cup {\lhd_2}$, $\lambda$ is the ``union'' of $\lambda_1$
and $\lambda_2$, and ${\procrel} = {\procrel_1} \cup {\procrel_2} \cup R$.
Here, $R$ contains, for all $p \in \Procs$ such that $(\Events_1)_p$ and
$(\Events_2)_p$ are non-empty, the pair $(e_1,e_2)$, where $e_1$ and $e_2$ are the last and the first event executed by $p$ in $M_1$ and $M_2$, respectively.
Due to condition (4), concatenation is a partially defined operation:  $\msc_1\cdot\msc_2$ is defined if  
for all 
$s_1 \in \Unm{\msc_1}$ and
$s_2 \in \SendEv{\msc_2}$ that have the same sender and destination 
($\lambda(s_1) \in \pqsAct{p}{q}$ and $\lambda(s_2) \in \pqsAct{p}{q}$),
we have $s_2 \in \Unm{\msc_2}$. In particular, $\msc_1\cdot\msc_2$ is defined
when $\msc_1$ is orphan-free.
Concatenation is associative.

We recall from \cite{BolligGFLLS21} the definition of   weakly synchronous MSC. We say that an MSC is weakly synchronous if it can be broken into phases where  all sends are scheduled before all receives. 


\begin{definition}[weakly synchronous]\label{def:weaksync-new}
We say that $\msc \in \MSCs$ is
\emph{weakly synchronous} if it is of the form
$\msc = \msc_1 \cdot \msc_2 \cdots \msc_n$
such that for every $\msc_i = (\Events,\procrel,\lhd,\lambda)$ 
$\SendEv{\msc_{i}}$ is
a ${\le_{\msc_i}}$-downward-closed set.
\end{definition}

We now recall the definition of communicating system, which consists of finite-state machines $A_p$
(one per process $p \in \Procs$) that can exchange messages.

\begin{definition}[communicating system]\label{def:cs}
A \emph{(communicating) system} over $\Procs$ and $\Msg$ is a tuple
   $ \Sys = ((A_p)_{p\in\procSet})$. For each
   $p \in \Procs$, $A_p = (Loc_p, \delta_p, \ell^0_p, \ell^{acc}_p)$ is a finite transition system where:
$\Loc_p$ is the finite set of (local) states of $p$,
 $\delta_p    \subseteq \Loc_p \times \pAct{p} \times \Loc_p$  (also denoted $\ell \xrightarrow[A_p]{a} \ell'$) is the
    transition relation of $p$,  
 $\ell^{acc}_p \in \Loc_p $ is the  final state of $p$.
\end{definition}

Given $p \in \Procs$ and a transition $t = (\ell,a,\ell') \in \delta_p$, we let
$\tsource(t) = \ell$, $\ttarget(t) = \ell'$, $\tlabel(t) = a$, and
$\tmessage(t) = \msg$ if $a \in \msAct{\msg} \cup \mrAct{\msg}$.

%
An \emph{accepting run} of $\Sys$ on  an \MSCs $\msc$ is a mapping
$\rho: \Events \to \bigcup_{p \in \Procs} \delta_p$
that assigns to every event $e$ the transition $\rho(e)$
that is executed at $e$ by $A_p$. Thus, we require that
\begin{enumerate*}[label={(\roman*)}]
\item for all $e \in \Events$, we have $\tlabel(\rho(e)) = \lambda(e)$,
\item for all $(e,f) \in {\procrel}$, $\ttarget(\rho(e)) \xrightarrow[A_p]{\varepsilon}^* \tsource(\rho(f))$,
\item for all $(e,f) \in {\lhd}$, $\tmessage(\rho(e)) = \tmessage(\rho(f))$,
\item for all $p \in \Procs$ and $e \in \Events_p$ such that there is no $f \in \Events$ with $f \procrel e$, we have $\tsource(\rho(e)) = \ell_p^0$,
\item for all $p \in \Procs$ and $e \in \Events_p$ such that there is no $f \in \Events$ with $e \procrel f$, we have $\ttarget(\rho(e)) = \ell^{acc}_p$ and,
\item  $\Unm{\msc} = \emptyset$.
\end{enumerate*}
Essentially, in an accepting run of $\Sys$ every $A_p$ takes a sequence of transitions that lead to its final state $\ell^{acc}_p$, and such that each send action will have a matching receive action (i.e., there are no unmatched messages).
 The \emph{language} of $\Sys$ is $\Lang{\Sys} = \{\msc \in \MSCs \mid$ there is an accepting run of $\Sys$ on $\msc\}$. 
 We say that $\Sys$ is weakly synchronous if for all $\msc\in\Lang{\Sys}$, 
 $\msc$ is weakly synchronous.

The \emph{emptiness problem} is the decision problem that takes as 
input a system $\Sys$ and addresses the question ``is $\Lang{\Sys}$ empty?''. 
This problem is a configuration reachability problem, and under several circumstances,
its decidability is closely related to the one of the 
control state reachability problem. In this work, we will study the emptiness problem
with the additional hypothesis that $\Sys$ is a weakly synchronous system with three machines only.

Finally, we recall the less known notion of ``FIFO automaton'', a finite
state machine that can push into and pop from a FIFO queue. This is 
a system of communicating machines with just one machine, whose semantics is a set
of MSCs with a single timeline, for which we exceptionally relax condition (1) of Definition~\ref{def:msc}, 
so to allow a send event and its matching receive event to occur on the same machine.
The following result is proved in \cite[Lemma~4]{DBLP:conf/icalp/FinkelL17}.
\begin{lemma}[\cite{DBLP:conf/icalp/FinkelL17}]
   \label{lem:FIFO-automaton-emptiness-is-not-decidable}
   The emptiness problem for FIFO automata is undecidable.
\end{lemma}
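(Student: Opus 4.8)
The plan is to reduce the halting problem for deterministic Turing machines to the emptiness problem for FIFO automata. Given a deterministic Turing machine $M$, I would build a FIFO automaton $\mathcal{A}_M$ that simulates $M$ step by step, storing the entire tape contents in its single queue, and that reaches its final state (with the queue fully drained) if and only if $M$ halts in its accepting state from the empty input. Since a FIFO automaton satisfies $\Lang{\mathcal{A}_M} \neq \emptyset$ exactly when such an accepting run exists, and the halting problem is undecidable, this yields the lemma. Note that an accepting run additionally requires $\Unm{\msc} = \emptyset$, i.e.\ every pushed (sent) symbol is eventually popped (received); I arrange this by letting $\mathcal{A}_M$ enter a draining phase that empties the queue once $M$'s halting state is reached.

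The core of the construction is the encoding of a configuration of $M$ inside a FIFO queue. I would represent the finite used portion of the tape as a word over the tape alphabet, flag the cell currently scanned by the head with an auxiliary mark, and keep a distinguished bottom marker $\#$ at the rear. The finite control of $\mathcal{A}_M$ permanently remembers the current control state of $M$. A single transition of $M$ is then simulated by one full rotation of the queue: $\mathcal{A}_M$ repeatedly pops the symbol at the front and pushes it back to the rear, maintaining a bounded window of the most recently popped symbols in its finite control, until the marker $\#$ comes back around, signalling that a complete rotation, hence a faithful copy of the updated tape, has been produced. When the flagged cell passes through the control during a rotation, $\mathcal{A}_M$ rewrites it according to $M$'s transition, moves the flag to the appropriate neighbour, and updates the stored control state.

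The step that needs the most care, and the main obstacle, is realising the head movement correctly under the FIFO discipline, since symbols leave the control in exactly the order they entered. Moving the head to the right is immediate: after rewriting and re-pushing the scanned cell, $\mathcal{A}_M$ simply flags the next popped symbol. Moving the head to the left is problematic, because the cell to be flagged has already been pushed to the rear by the time the automaton learns that the head moves left. I would resolve this by delaying every re-enqueue by one position, i.e.\ by always holding the last popped symbol in a one-cell buffer in the finite control, so that the left neighbour is still available to be flagged before it is pushed. A secondary subtlety is extending the tape with a fresh blank cell when the head runs off either end; this is detected via the marker $\#$ and handled by inserting a flagged blank just before $\#$ is re-pushed. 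All of these fixes cost only a bounded amount of extra finite state, so the simulation is exact: one rotation of $\mathcal{A}_M$ reproduces precisely the successor configuration of $M$. The correctness of the reduction, namely that $\mathcal{A}_M$ can reach its final state with an empty queue if and only if $M$ halts, then follows by a straightforward induction on the number of simulated steps.
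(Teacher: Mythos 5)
Your proof is correct: the queue-rotation simulation of a Turing machine---with the distinguished rear marker, the one-cell delay buffer to realise left moves under the FIFO discipline, and the final draining phase that forces $\Unm{\msc}=\emptyset$---is the standard argument that a single finite-state machine with one FIFO queue is Turing powerful. The paper does not prove this lemma itself but imports it by citation (Lemma~4 of Finkel and Lozes), and the proof behind that cited result is essentially the same classical reduction you describe, so there is nothing to object to beyond the routine bookkeeping you already flag (extending the tape at either end and initialising the queue with the start configuration).
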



\section{Treewidth of weakly synchronous \pp MSCs}
\label{sec:tw_ws_pp}

There is a strong correlation between MSCs and graphs. An MSC is a directed graph (digraph in the following) where the nodes are the events of the MSC and the arcs are represented by the $\procrel$ and the $\lhd$ relations. We are, therefore, able to use some tools and techniques from graph theory to possibly derive some interesting results about MSCs. A graph parameter which is particularly important in this context is the \emph{treewidth}~\cite{bodlaender:partialK} mostly due to Courcelle's theorem that, roughly, states that many properties can be checked in classes of MSCs with bounded treewidth\footnote{Since we do not explicitly use  tree-decompositions, we refer to~\cite{bodlaender:partialK} for their formal definitions. Definitions are recalled in Appendix \ref{app:defs} for the reviewer convenience.}. For instance, in \cite{BolligGFLLS21}, it is shown that the class of weakly synchronous mailbox MSCs has bounded treewidth. Interestingly enough, it is also shown that the bigger class of weakly synchronous \pp MSCs has unbounded treewidth, by means of a reduction to the  Post correspondence problem. 
Here we give a more direct proof, for all weakly synchronous systems that have at least three processes. We begin with some terminology:

\begin{definition}
\emph{To  contract} an arc $(u,v)$ in a (di)graph $G$ means replacing $u$ and $v$ by a single vertex whose neighborhood is the union of the neighborhoods of $u$ and $v$.
A (di)graph $H$ is a \emph{minor} of a (di)graph $G$ if $H$ can be obtained from a subgraph of $G$ by contracting some edges/arcs. 
\end{definition}



Next, we show how to build a family of weakly synchronous MSCs with three processes ($a$, $b$ and $c$) and unbounded treewidth. We want to find a class of MSCs that admit grids of unbounded size as a minor.
The idea is illustrated in Fig.~\ref{fig:ws_p2p_tw_3}, and it consists in bouncing groups of messages between processes so to obtain the depicted shape. The class of MSCs is indexed by two non-zero natural numbers: $h$ and $\ell$. Intuitively, $h$ represents the number of consecutive events in a group, and $\ell$ is the number of groups per process, divided by 2. 
The graph depicted on the top left of Fig.~\ref{fig:ws_p2p_tw_3} is not an MSC, because it is undirected and there are multiple actions associated to the same event. Nonetheless, the connection with MSCs is quite intuitive, and formalized in Lemma~\ref{lem:p2pwsMSC}.

  We, now, specify how to build a digraph $G_{h,\ell} = (V(G_{h,\ell}), \arcs(G_{h,\ell}))$, from which our MSC $G_{h,\ell}^*$ will be obtained. 
The set of vertices  $V(G_{h,\ell})={\cal A} \cup {\cal B} \cup {\cal C}$
contains all the events of each process:  ${\cal A}=\{s^{i,j}_a,r^{i,j}_a \mid {1 \leq i \leq h, 1 \leq j \leq \ell} \}$, ${\cal B}=\{s^{i,j}_b,r^{i,j}_b \mid {1 \leq i \leq h, 1 \leq j \leq \ell} \}$, and ${\cal C}=\{s^{i,j}_c,r^{i,j}_c \mid {1 \leq i \leq h, 1 \leq j \leq \ell} \}$. 

For $x \in \{a,b,c\}$ and $y \in \{r,s\}$, we add the following arcs to $\arcs(G_{h,\ell})$, which will represent the ``timelines'' connecting events of each process:
\begin{enumerate}
    \item 
    for each group of $h$ events/messages and $1 \leq j \leq \ell$,  $Col_{x,y,j}= \{ (y^{i,j}_x,y^{i+1,j}_x) \mid 1 \leq i <h\}$; 
    
    \item then, to link groups together 
    $\{ (y^{h,j}_x,y^{1,j+1}_x) \mid 1\leq j < \ell\}$;
    \item and finally, to link the phase of sendings with the one of receptions:  $(s^{h,\ell}_x,r^{1,1}_x)$. 
\end{enumerate}
It remains to add the arcs that   correspond to the messages exchanged by the processes. Intuitively, each vertex $s^{i,j}_x$  corresponds to two messages sent by process $x$ to the other two processes (except for $j=1$ and $x=a$, in which case it will correspond to a single message), and each vertex $r^{i,j}_x$ will correspond to two messages received by process $x$ from the other two processes (except for $j=\ell$ and $x=c$, in which case it will correspond to a single message).
Formally: 
    \begin{align}
    E_{\mathcal{M}}= & \{ (s^{i,j}_a,r^{i,j}_b), (s^{i,j}_c,r^{i,j}_b), (s^{i,j}_c,r^{i,j}_a), (s^{i,j}_b,r^{i,j}_a),(s^{i,j}_b,r^{i,j}_c) \mid 1 \leq i \leq h, 1 \leq j \leq \ell\} \nonumber\\
    & \cup \{(s^{i,j+1}_a,r^{i,j}_c) \mid 1 \leq i \leq h, 1 \leq j < \ell\}.\label{eq:edges}
    \end{align}


%
%
 

\begin{figure}[t]
    \centering
    \includegraphics[width=\textwidth]{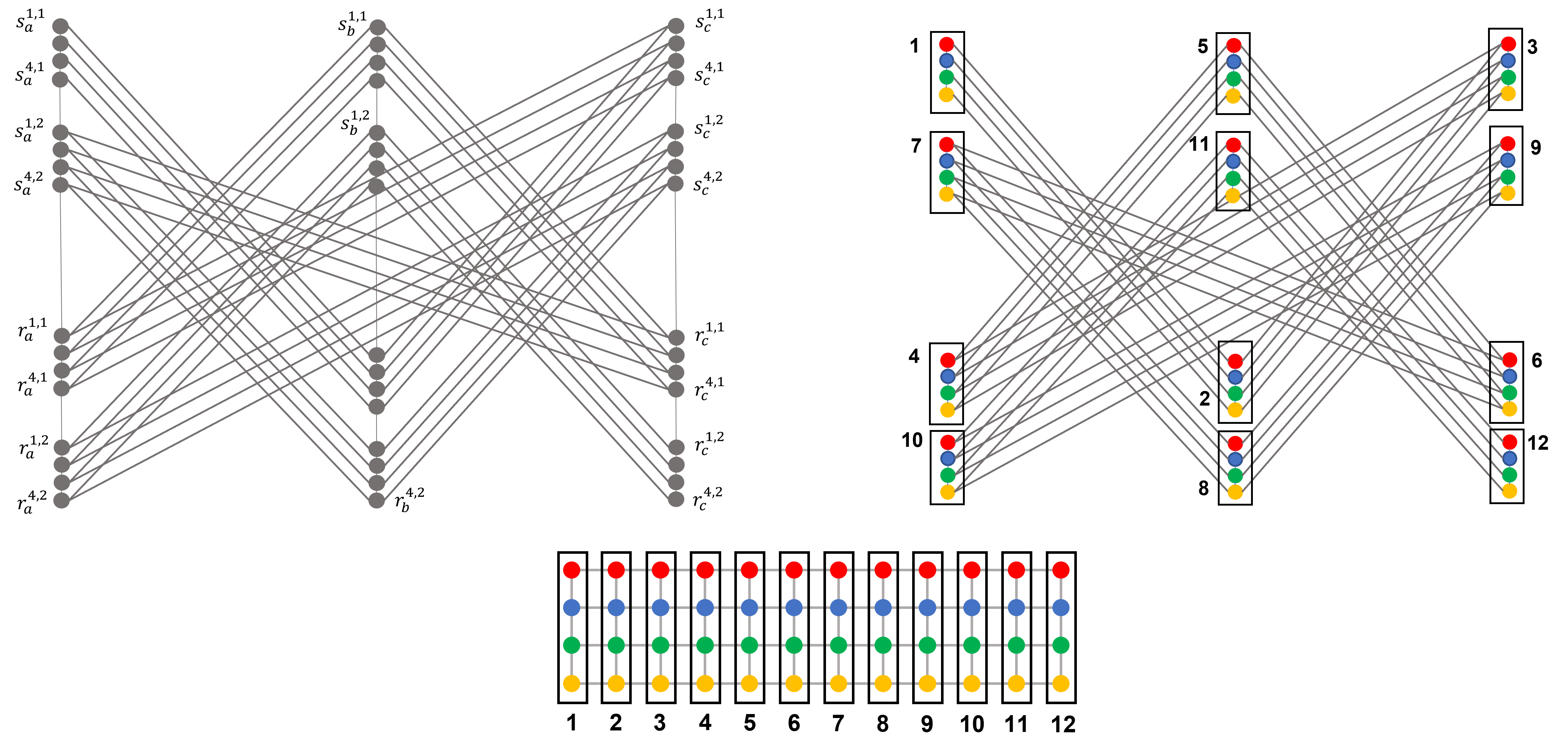}
    \caption{The undirected graph of $G_{4,2}$ (top left) with a $4 \times 12$ grid as a minor (top right and bottom). All arcs go from top to bottom. 
    } 
    \label{fig:ws_p2p_tw_3}
\end{figure}

\begin{wrapfigure}[13]{R}{0.2\textwidth}
    \centering
    \includegraphics[width=0.2\textwidth]{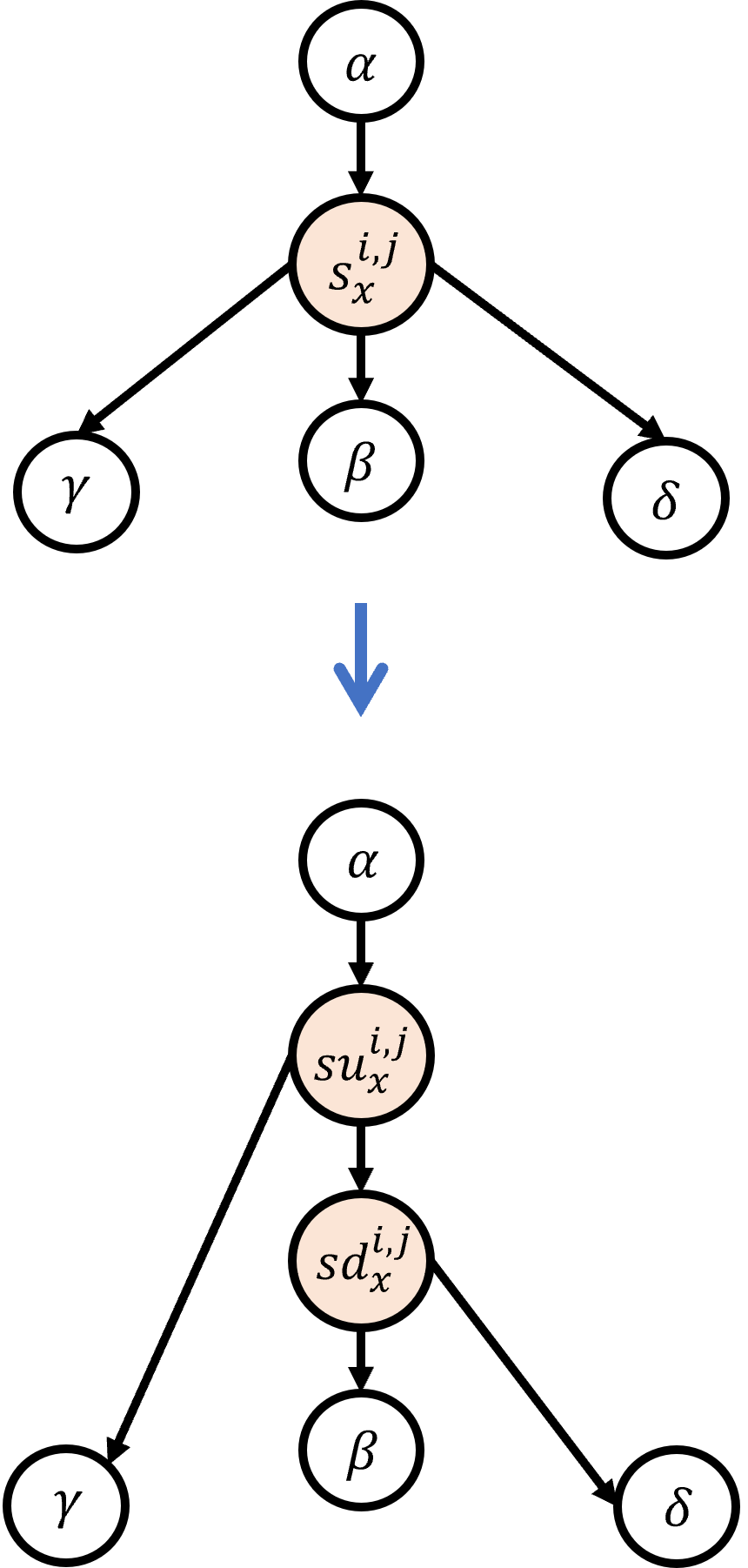}
    \caption{Transformation of Lemma~\ref{lem:p2pwsMSC}.}
    \label{fig:from_G_to_MSC}
\end{wrapfigure}

\begin{lemma}\label{lem:p2pwsMSC}
    For any $h, \ell \in \mathbb{N}^+$, $G_{h,\ell}$ is the minor of a graph arising from a weakly synchronous \pp MSC $G_{h,\ell}^*$ with $3$ processes and a single phase. 
\end{lemma}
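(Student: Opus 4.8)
The obstacle to reading $G_{h,\ell}$ directly as an MSC is that a vertex $s^{i,j}_x$ may carry two outgoing message arcs (and $r^{i,j}_x$ two incoming ones), whereas an MSC event performs a single action. The plan is therefore to build $G_{h,\ell}^*$ by \emph{splitting} every such vertex into one event per incident message arc: a vertex $s^{i,j}_x$ sending to two processes becomes two send events, a vertex $r^{i,j}_x$ receiving from two processes becomes two receive events, while the single-message vertices $s^{i,1}_a$ and $r^{i,\ell}_c$ are left unsplit. This is the transformation sketched in Fig.~\ref{fig:from_G_to_MSC}. Each arc of $E_{\mathcal{M}}$ incident to a split copy becomes a message arc of $\lhd$, with message names chosen so that matched send/receive copies agree. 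Since every arc of $E_{\mathcal{M}}$ links two distinct processes and each split receive copy is the target of exactly one message arc, conditions (1) and (2) of Definition~\ref{def:msc} hold immediately.

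Next I would fix the process relation by ordering, on every process, all events lexicographically by $(j,i)$ (group index $j$ first, then position $i$ inside the group), placing every send event before every receive event and keeping the two copies of a split vertex adjacent. This is precisely the order induced by the timeline arcs $Col_{x,y,j}$, the group-linking arcs, and the send-to-reception arc $(s^{h,\ell}_x,r^{1,1}_x)$ of $G_{h,\ell}$, refined on the split copies. Because sends precede receives on every timeline and every message arc points from a send to a receive, no receive event can lie below a send event, so $\SendEv{G_{h,\ell}^*}$ is $\le_{G_{h,\ell}^*}$-downward-closed; thus $G_{h,\ell}^*$ will be weakly synchronous with a single phase, once it is shown to be an MSC.

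For condition (3), restricted to send events the only arcs are the per-process chains (on pairwise disjoint event sets), and likewise for receive events; since all message arcs and all send-to-reception arcs cross from the send side to the receive side, $(\procrel \cup \lhd)$ is acyclic and $\le_{G_{h,\ell}^*}$ is a partial order. The delicate point, and the main obstacle, is condition (4): for each ordered pair of processes I must check that messages with the same source and target are received in the order they are sent. Every such family is indexed by $(i,j)$ and is both sent and received in the common lexicographic order on $(j,i)$, and the two copies of a split vertex go to distinct receivers so their internal order is irrelevant. The only subtle case is the \emph{diagonal} family $\{(s^{i,j+1}_a,r^{i,j}_c)\}$ from $a$ to $c$, where the sending group $j+1$ and the receiving group $j$ are shifted; but $j\mapsto j+1$ is monotone, so the send order (by $(j+1,i)$) and the receive order (by $(j,i)$) still coincide. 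This establishes that $G_{h,\ell}^*$ is a weakly synchronous \pp MSC with three processes and one phase.

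Finally, to exhibit $G_{h,\ell}$ as a minor of the digraph underlying $G_{h,\ell}^*$, I would contract, for each split vertex, the $\procrel$-arc joining its two adjacent copies. Contraction merges the two copies into a single super-node whose incident arcs are the union of theirs, so each merged send-node regains its (up to two) outgoing message arcs, each merged receive-node its (up to two) incoming ones, the inter-vertex timeline arcs become exactly those of $Col_{x,y,j}$ and the group/phase links, and the message arcs are in bijection with $E_{\mathcal{M}}$. The resulting digraph is precisely $G_{h,\ell}$. The bulk of the work is the routine but case-laden verification of condition (4), the diagonal $a\to c$ family being the one place where the group indices of sends and receipts differ.
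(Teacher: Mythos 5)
Your proposal is correct and follows essentially the same route as the paper: split every vertex carrying two message arcs into two adjacent events joined by a process arc (the paper phrases this as splitting the degree-$4$ vertices plus the four special degree-$3$ endpoints $s_b^{1,1}, s_c^{1,1}, r_a^{h,\ell}, r_b^{h,\ell}$, which is the same set), order each timeline with all sends before all receives to get a single weakly synchronous phase, check the p2p condition via the order-preserving (``parallel'') arrangement of same-channel message arcs, and recover $G_{h,\ell}$ as a minor by contracting the splitting arcs. Your explicit treatment of condition (4) for the shifted diagonal family $(s^{i,j+1}_a, r^{i,j}_c)$ is a welcome elaboration of a point the paper only asserts implicitly.
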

\begin{proof}
 Fig.~\ref{fig:from_G_to_MSC}  exemplifies the transformation below. 
Note that some vertices of $G_{h,\ell}$ have degree $4$ while any MSC is a subcubic graph (i.e., every vertex has degree at most $3$). For every $s^{i,j}_x$ with degree $4$, let $\alpha$ (resp., $\beta$) be the in-neighbor (resp., out-neighbor) of $s^{i,j}_x$ in ${\cal P}_x$ and let $\gamma$ and $\delta$ be the other two neighbors of $s^{i,j}_x$. 
    Replace $s^{i,j}_x$ by two vertices $su^{i,j}_x$ and $sd^{i,j}_x$, with the $5$ arcs $(\alpha,su^{i,j}_x), (su^{i,j}_x,sd^{i,j}_x), (sd^{i,j}_x,\beta),(su^{i,j}_x,\gamma)$ and $(sd^{i,j}_x,\delta)$. Do a similar transformation for every $r^{i,j}_x$ with degree $4$. A similar transformation is done for the four vertices (with degree $3$) $s_b^{1,1}, s_c^{1,1},r_a^{h,\ell}$ and $r_b^{h,\ell}$.  
Let $G^*_{h,\ell}$ be the obtained digraph. 
  It is clear that $G^*_{h,\ell}$ is an MSC and that $G_{h,\ell}$ is a minor of $G^*_{h,\ell}$. 
    
    Note that for any $x \in \{a,b,c\}$, ${\cal X} \in \{{\cal A},{\cal B},{\cal C}\}$ induces a directed path ${\cal P}_x$ with first the vertices $s^{i,j}_x$ (in increasing lexicographical order of $(j,i)$) and then the vertices $r^{i,j}_x$ (in increasing lexicographical order of $(j,i)$).
The fact that $G^*_{h,\ell}$ is weakly synchronous with one phase directly follows the fact that, for every $x \in \{a,b,c\}$, the vertices $s,su$ and $sd$ (corresponding to sendings) appear before the vertices $r,ru$ and $rd$ (corresponding to receptions) in the directed path ${\cal P}_x$.

    Moreover, for every $x,y \in \{a,b,c\}$, $x \neq y$, the arcs from ${\cal P}_x$ to ${\cal P}_y$ are all parallel (i.e.,  for every arc $(u,v)$ and $(u',v')$ from ${\cal P}_x$ to ${\cal P}_y$, if $u$ is a predecessor of $u'$ in ${\cal P}_x$, then $v$ is a predecessor of $v'$ in ${\cal P}_y$). This implies that $G^*_{h,\ell}$ is \pp. \qed
\end{proof}

Note that, for  fixed $i\leq h$ and $j<\ell$, $P_{i,j}=(s^{i,j}_a,r^{i,j}_b,s^{i,j}_c,r^{i,j}_a, s^{i,j}_b,r^{i,j}_c,s^{i+1,j}_a)$ is a  (undirected) path with $6$ arcs linking $s^{i,j}_a$ to $s^{i,j+1}_a$. From this, it is not difficult to see that $G_{h,\ell}$ admits a grid of size $h \times 6\ell$ as a minor, which is the content of next lemma (see Fig.~\ref{fig:ws_p2p_tw_3} for an example). 

Let $tw(G_{h,\ell})$ be the treewidth of the underlying undirected graph of $G_{h,\ell}$.

\begin{lemma}\label{lem:unboundTw}
    For any $h, \ell \in \mathbb{N}^*$, $tw(G_{h,\ell}) \geq \min \{h,6\ell\}$. 
\end{lemma}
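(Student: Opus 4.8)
The plan is to exhibit an $h \times 6\ell$ grid as a minor of $G_{h,\ell}$, and then invoke the standard fact that the treewidth of a graph is at least the treewidth of any of its minors, together with the well-known fact that the $m \times n$ grid has treewidth $\min\{m,n\}$. Since treewidth is monotone under taking minors, this immediately yields $tw(G_{h,\ell}) \geq tw(\text{grid}_{h,6\ell}) = \min\{h, 6\ell\}$. Thus the entire content of the lemma reduces to the minor claim, and I would spend the bulk of the argument constructing that minor explicitly.

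To build the grid minor, I would work directly with the underlying undirected graph and use the paths $P_{i,j}$ already singled out in the excerpt. For fixed $i \leq h$ and $j < \ell$, the path $P_{i,j} = (s^{i,j}_a, r^{i,j}_b, s^{i,j}_c, r^{i,j}_a, s^{i,j}_b, r^{i,j}_c, s^{i+1,j}_a)$ has $6$ arcs and connects $s^{i,j}_a$ to $s^{i,j+1}_a$ (via the message edge $(s^{i,j+1}_a, r^{i,j}_c)$ from~\eqref{eq:edges} that closes it to the next column). Reading these paths for $j = 1, \dots, \ell-1$ and then appending the final column of events, each horizontal line indexed by a fixed $i$ traverses $6$ intermediate vertices per group, giving roughly $6\ell$ vertices across. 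The vertical direction is supplied by the timeline arcs $Col_{x,y,j}$ and the group-linking arcs of items~(1)--(2), which connect row $i$ to row $i+1$ within each process. The construction is therefore to declare the grid's $(i,k)$-node to be a suitable branch set (a connected subgraph) assembled from the events lying on the $k$-th segment of the $i$-th horizontal path, contracting each such branch set to a single grid vertex.

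The key steps, in order, are: (1) fix the horizontal paths $P_{i,j}$ and observe they tile a $6$-length strip between consecutive ``$a$-anchors''; (2) define a branch-set map sending each of the $h \times 6\ell$ grid cells to a connected vertex subset of $G_{h,\ell}$, ensuring the branch sets are pairwise disjoint and that horizontally/vertically adjacent cells have branch sets joined by an arc of $G_{h,\ell}$; (3) verify that contracting each branch set and deleting leftover vertices/edges produces exactly the grid; (4) conclude via minor-monotonicity of treewidth and the grid's treewidth.

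I expect the main obstacle to be step~(2): carefully defining the branch sets so that vertical adjacency is honored uniformly across all six ``slots'' of each horizontal strip, since the vertical connections (the timeline and group-linking arcs) only directly connect events of the same process $x$ and same type $y$, whereas consecutive slots of a horizontal path alternate both process and send/receive type. The cleanest route is probably to take the branch sets to be single vertices wherever possible and to use short segments only where the grid's rigid adjacency pattern forces a contraction; one must check that no two branch sets accidentally share a vertex and that no spurious adjacencies are created. Because the picture in Fig.~\ref{fig:ws_p2p_tw_3} already displays the intended $4 \times 12$ grid for $G_{4,2}$, the formal argument can be guided by that illustration, and the verification is then a matter of bookkeeping over the index ranges $1 \leq i \leq h$, $1 \leq j \leq \ell$ rather than any deep structural insight.
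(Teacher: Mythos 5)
Your plan is correct and lands on the same grid, but it takes a detour the paper avoids: the $h\times 6\ell$ grid is not merely a minor of $G_{h,\ell}$, it is a \emph{subgraph}. Keep exactly the message edges $E_{\mathcal{M}}$ and the within-group timeline arcs $Col_{x,y,j}$ (item 1), and discard the group-linking arcs of items 2--3. For fixed $i$, the edges of $E_{\mathcal{M}}$ string together the row $s^{i,1}_a, r^{i,1}_b, s^{i,1}_c, r^{i,1}_a, s^{i,1}_b, r^{i,1}_c, s^{i,2}_a,\dots,r^{i,\ell}_c$ of $6\ell$ vertices, and the $Col_{x,y,j}$ arcs join position $k$ of row $i$ to position $k$ of row $i+1$, since position $k$ corresponds to the same triple $(x,y,j)$ in every row. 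This also dissolves the obstacle you flag in step (2): vertical adjacency in the grid is between the \emph{same} slot in consecutive rows, not between consecutive slots, so the alternation of process and send/receive type along a row is irrelevant, every branch set can be a single vertex, and there is nothing to contract. The remaining steps are as you say --- the $h\times 6\ell$ grid has treewidth $\min\{h,6\ell\}$ and treewidth is monotone under subgraphs (a fortiori under minors) --- so your argument, once step (2) is carried out with singleton branch sets, is the paper's proof; the only cost of your phrasing is the bookkeeping of a minor model that the subgraph observation renders unnecessary.
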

\begin{proof}
    The subgraph obtained from $G_{h,\ell}$ by  keeping the arcs in item  $1$ and Equation \ref{eq:edges}: $G'_{h,\ell}=(V(G_{h,\ell}),E_{\mathcal{M}} \cup \bigcup_{x \in \{a,b,c\}, y \in \{r,s\}, 1 \leq j \leq \ell} Col_{x,y,j})$, is a $h \times 6\ell$ grid. 
    From \cite{bodlaender:partialK}, we know that $tw(G'_{h,\ell}) \geq \min \{h,6\ell\}$ and, since treewidth is closed under subgraphs~\cite{bodlaender:partialK}, $tw(G_{h,\ell}) \geq tw(G'_{h,\ell}) \geq \min \{h,6\ell\}$. \qed
\end{proof}

We can then easily derive the main result for this section.


\begin{theorem}
\label{th:tw}
    The class of weakly synchronous \pp MSCs with three processes (and a single phase) has unbounded treewidth.
\end{theorem}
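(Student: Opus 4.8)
The plan is to assemble the theorem directly from the two lemmas just established, so the proof will be very short. The key observation is that Lemma~\ref{lem:p2pwsMSC} produces, for every $h,\ell \in \mathbb{N}^+$, an explicit weakly synchronous \pp MSC $G_{h,\ell}^*$ with three processes and a single phase, whose underlying graph has $G_{h,\ell}$ as a minor. Lemma~\ref{lem:unboundTw} then lower-bounds the treewidth of $G_{h,\ell}$ by $\min\{h,6\ell\}$.

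First I would recall the two relevant closure properties of treewidth from~\cite{bodlaender:partialK}: treewidth is monotone under taking minors, i.e.\ if $H$ is a minor of $G$ then $tw(H) \le tw(G)$. Since $G_{h,\ell}$ is a minor of the graph underlying $G_{h,\ell}^*$ (by Lemma~\ref{lem:p2pwsMSC}), this gives
\[
    tw(G_{h,\ell}^*) \;\ge\; tw(G_{h,\ell}) \;\ge\; \min\{h,6\ell\},
\]
where the second inequality is exactly Lemma~\ref{lem:unboundTw}.

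Next I would turn this pointwise bound into the unboundedness statement. Specializing to $h = \ell = n$ yields a sequence of weakly synchronous \pp MSCs $\{G_{n,n}^*\}_{n \in \mathbb{N}^+}$, each with three processes and a single phase, satisfying $tw(G_{n,n}^*) \ge \min\{n,6n\} = n$. Hence the treewidth of members of this class is not bounded by any constant, which is precisely the claim that the class has unbounded treewidth.

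I do not expect any genuine obstacle here, since all the work has been front-loaded into the two lemmas. The only points that require a little care are bookkeeping ones: confirming that each $G_{h,\ell}^*$ is indeed \emph{in} the class under consideration (three processes, \pp, weakly synchronous, single phase --- all guaranteed by Lemma~\ref{lem:p2pwsMSC}), and invoking minor-monotonicity of treewidth correctly, noting that treewidth is defined on the underlying undirected graph so the passage from the digraph to its underlying graph is harmless. If one wanted to be fully rigorous about minor-monotonicity (the excerpt states closure under subgraphs explicitly but only uses ``minor'' informally), I would either cite it from~\cite{bodlaender:partialK} or observe that the subdivision-and-contraction transformation of Lemma~\ref{lem:p2pwsMSC} can only increase treewidth, so the bound transfers.
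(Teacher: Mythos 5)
Your proposal is correct and follows exactly the paper's own argument: combine Lemma~\ref{lem:p2pwsMSC} with Lemma~\ref{lem:unboundTw} via minor-monotonicity of treewidth to get $tw(G_{h,\ell}^*) \ge \min\{h,6\ell\}$, which is unbounded. The only (harmless) addition is your explicit specialization to $h=\ell=n$ to spell out the unboundedness.
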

\begin{proof}
From Lemma~\ref{lem:p2pwsMSC}, $G^*_{h,\ell}$ is a weakly synchronous \pp MSC with $3$ processes and $G_{h,\ell}$ is a minor of $G^*_{h,\ell}$. Hence, from Lemma~\ref{lem:unboundTw} and the fact that the treewidth is minor-closed~\cite{bodlaender:partialK}, we get that $tw(G^*_{h,\ell})\geq \min \{h,6\ell\}$. \qed
\end{proof}

Notice that, a similar technique, this time exploiting four processes instead of three, can be used to show that we can build a weakly synchronous \pp MSC that can be contracted to whatever graph.

\begin{theorem}
\label{th:minor}
    Let $H$ be any graph. There exists a weakly synchronous \pp MSCs with four processes that admits $H$ as minor.
\end{theorem}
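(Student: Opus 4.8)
The plan is to reduce the statement to a universality property of three-dimensional grids, and then to realise such grids with four processes by lifting the construction of Theorem~\ref{th:tw} from two to three dimensions. The reduction rests on a purely graph-theoretic fact: for every finite graph $H$ there is an integer $k$ such that $H$ is a minor of the three-dimensional grid $Q_k = [k]^3$. Indeed, it suffices to place the vertices of $H$ on distinct nodes of $Q_k$ and to route each edge of $H$ as a path along grid arcs; by the standard three-dimensional layout argument, once $k$ is large enough these $|E(H)|$ paths can be made pairwise internally vertex-disjoint (in three dimensions generic routes need not cross), so that $H$ is in fact a topological minor of $Q_k$. Since the minor relation is transitive, it is therefore enough to build, for every $k$, a weakly synchronous \pp MSC with four processes whose underlying graph admits $Q_k$ as a minor.

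To produce $Q_k$ as a minor I would lift the construction of $G_{h,\ell}$ from three processes $a,b,c$ to four processes $a,b,c,d$. Recall that the two dimensions of the $h \times 6\ell$ grid of Lemma~\ref{lem:unboundTw} arise from two independent indices — the position $i$ inside a group and the group index $j$ — which the bouncing of messages among $a,b,c$ weaves together. A fourth process $d$ supplies a third independent index and a third family of message bounces ``orthogonal'' to the previous two, so that the analogous digraph $G_{h,\ell,m}$ should contain, after the very same degree-reduction as in Lemma~\ref{lem:p2pwsMSC} (splitting each degree-$4$ event into an ``up'' and a ``down'' copy), a three-dimensional grid of size linear in $h,\ell,m$ as a minor. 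As before, the construction is kept to a single phase, so that on each of the four timelines all send events precede all receive events; this makes the MSC weakly synchronous, and the \pp condition~(4) is checked by verifying that the message arrows between each fixed ordered pair of processes stay parallel.

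Given an arbitrary $H$, I would then choose $k$ with $H$ a minor of $Q_k$, take the corresponding four-process MSC $M$, and conclude by transitivity that $H$ is a minor of the graph underlying $M$, which is weakly synchronous and \pp. The main obstacle is exactly the feature that three processes lack: the third grid dimension forces the connection pattern to be genuinely \emph{non-planar}, whereas the laminar bouncing of Theorem~\ref{th:tw} only yields planar (two-dimensional) grid minors. The crux is thus to route the messages through the fourth process so that they \emph{cross} the messages of the three-process fabric — something condition~(4) permits only between arrows of distinct ordered pairs of processes, never within a single pair — while still preserving the single-phase ``all sends before all receives'' shape on every timeline. Engineering these crossings coherently throughout the whole construction, and performing the vertex-splitting of Lemma~\ref{lem:p2pwsMSC} consistently at every high-degree event, is the delicate part of the argument.
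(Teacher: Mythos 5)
Your reduction to three-dimensional grids is sound as far as it goes (every finite graph is indeed a minor of $[k]^3$ for $k$ large enough, and minors compose), but the theorem is then carried entirely by the second half of your argument, and that half is never actually supplied. You do not define the digraph $G_{h,\ell,m}$, you do not say which process sends which messages to which, and you do not verify weak synchrony or the \pp\ condition~(4) for the result; you only assert that an analogous construction ``should'' contain a three-dimensional grid as a minor and then explicitly defer ``the delicate part of the argument''. That deferred part is precisely the content of the theorem: extracting a non-planar minor from the union of four timelines and pairwise-parallel message bundles is exactly what has to be demonstrated, and nothing in Lemma~\ref{lem:p2pwsMSC} or Theorem~\ref{th:tw} does it for you, since, as you yourself note, those constructions only ever yield planar two-dimensional grid minors. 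As written, the proposal replaces one unproved universality statement by another.

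The paper's proof shows that the detour through $[k]^3$ is unnecessary. It takes $G_{h,\ell}$ with $h=|V(H)|$ and $\ell=|E(H)|$, deletes the ``vertical'' timeline arcs, and contracts, for each $i$, the path $\bigcup_{1\le j\le\ell}P_{i,j}$, collapsing the $i$-th row of the $h\times 6\ell$ grid to a single vertex representing $v_i$. The fourth process contributes one event $d_j$ per edge $e_j=\{v_i,v_{i'}\}$, receiving one message from row $i$ and one from row $i'$ (arcs $(r^{i,j}_a,d_j)$ and $(r^{i',j}_a,d_j)$); contracting one of these two arcs realises $e_j$. The only genuinely new messages are thus the $2\ell$ arcs into the fourth timeline, whose parallelism is immediate because their sources are ordered by $j$ along process $a$ and their targets are ordered by $j$ along process $d$, so no ``orthogonal crossing fabric'' ever has to be engineered. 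If you want to salvage your plan you would have to exhibit concrete wiring of this kind for the third grid dimension; the direct encoding above makes that effort superfluous.
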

\begin{proof}
Let $V(H)=\{v_1,\cdots,v_h\}$ and $E(H)=\{e_1,\cdots,e_{\ell}\}$. Take graph $G_{h,\ell}$ defined above. Add a new directed path $(d_1,\cdots,d_{\ell})$  (which corresponds to the fourth process). Finally, for every $1 \leq j \leq \ell$, and edge $e_j=\{v_i,v_{i'}\} \in E(H)$,  add two arcs $(r^{i,j}_a,d_j)$ and $(r^{i',j}_a,d_j)$. Let $G$ be the obtained graph.

Using similar arguments as in the proof of Lemma~\ref{lem:p2pwsMSC}, $G$ arises from a weakly synchronous \pp MSC with $4$ processes. 
Now, to see that $H$ is a minor of $G$, first remove all ``vertical'' arcs from $G$. Then, for every $1 \leq i \leq h$, contract the path $\bigcup_{1 \leq j \leq \ell}P_{i,j}$ into a single vertex (corresponding to $v_i$), and finally contract the arc $(r^{i',j}_a,d_j)$ for every edge $e_j=\{v_i,v_{i'}\}$. These operations lead to $H$.
\qed
\end{proof}

\section{Reachability for weakly synchronous \pp systems with 3 machines}\label{sec:reach}

In \cite{BolligGFLLS21}, it is shown that the control state reachability problem
for weakly \pp synchronous systems with at least 4 processes is undecidable. The result is obtained via a reduction of the Post correspondence problem. In the same paper, following from the boundedness of treewidth,  it is also shown that reachability  is decidable for systems with 2 processes. 
The arguments easily adapt to show the same results for the emptiness problem instead.
The decidability of reachability, or emptiness, remained open for systems with 3 processes.
We already showed that the treewidth of weakly synchronous \pp MSCs is unbounded  for 3 processes. 
But, this result alone is not enough to prove undecidability, still it gives us a hint on how to conduct the proof. Indeed,  inspired by the proof of the unboundedness of the treewidth, we provide a reduction from the emptiness problem for a FIFO automaton $\Sys_1$ (undecidable, see Lemma~\ref{lem:FIFO-automaton-emptiness-is-not-decidable})
to the emptiness problem for a weakly synchronous system $\Sys_3$ with three machines. The reduction makes sure that there is an accepting run of $\Sys_1$  if and only if there is one for $\Sys_3$, which shows the undecidability of the emptiness problem
for weakly synchronous systems with three machines.

Let $\Sys_1= (A)$, with $A= (Loc, \delta, \ell^0, \ell^{acc})$ be a communicating system with a single process over $\Msg$. 
We will consider only automata that, from any state, have at most one non-epsilon outgoing transition, and no self loops (i.e., transitions that start and land in the same state). More precisely, we prove that any system can be encoded into one that satisfies this additional property while accepting the same language (see the corresponding encoding in Appendix \ref{apx:epsilon}).

\begin{figure}[t]
    \begin{center}
        \begin{tikzpicture}[>=stealth,node distance=2cm,shorten >=1pt,
        every state/.style={text=black, scale =0.8}, semithick,
        font={\fontsize{8pt}{12}\selectfont},
        scale = 1
        ]
        \begin{scope}[->]
            \node[state,initial,initial text={}] (q0)  {\large $\ell_b^0$};
            \node[state, right of=q0] (q1)  {\large $\ell_b^?$};
            \node[state, accepting, right of=q1] (q2) {\large $\ell_b^{acc}$};
            \node[state, above of=q0] (q3) {\large $\ell_{b_0}^m$};
            \node[state,  below of=q1] (q4) {\large $\ell_{b_?}^m$};

        \path (q0) edge node [above] {$\varepsilon$} (q1);
        \path (q0) edge [bend left] node [left] {$\send{b}{a}{\msg}$} (q3);
        \path (q3) edge [bend left] node [right] {$\send{b}{c}{\msg}$} (q0);
        \path (q1) edge node [above] {$\varepsilon$}(q2);
            \path (q1) edge [bend left] node [right] {$\rec{a}{b}{\msg}$} (q4);
        \path (q4) edge [bend left] node [left] {$\rec{c}{b}{\msg}$} (q1);
        \node[thick] at (-1.1,0) {$A_b$};
        \end{scope}
    
        \begin{scope}[->, shift={(5.5,0)}]
            \node[state,initial,initial text={}] (q0)  {\large $\ell_c^0$};
            \node[state, right of=q0] (q1)  {\large $\ell_c^?$};
            \node[state,accepting, right of=q1] (q2) {\large $\ell_c^{acc}$};
            \node[state, above of=q0] (q3) {\large $\ell_{c_0}^m$};
            \node[state, below of=q1] (q4) {\large $\ell_{c_?}^m$};

        \path (q0) edge node [above] {$\varepsilon$} (q1);
        \path (q0) edge [bend left] node [left] {$\send{c}{b}{\msg}$} (q3);
        \path (q3) edge [bend left] node [right] {$\send{c}{a}{\msg}$} (q0);
        \path (q1) edge node [above] {$\varepsilon$}(q2);
            \path (q1) edge [bend left] node [right] {$\rec{b}{c}{\msg}$} (q4);
        \path (q4) edge [bend left] node [left] {$\rec{a}{c}{\msg}$} (q1);
        \node[thick] at (-1.1,0) {$A_c$};
        \end{scope}
    
    \end{tikzpicture}
    \captionof{figure}{Sketch of $A_b$ and $A_c$ of $\Sys_3$ (only a single message $m$ is considered).}
    \label{fig:encoding}
    \end{center}
\end{figure}
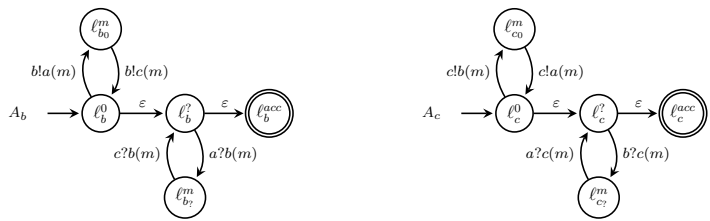

We provide an encoding of the FIFO automaton $\Sys_1$ into  the system $ \Sys_3 = (A_a, A_b, A_c)$ over $\Msg \cup \{D\}$, where $D$ is an additional special message called the \emph{dummy message}. We  show that $\Sys_3$ is weakly synchronous, and that $\Lang{\Sys_1}\neq\emptyset$ if and only if $\Lang{\Sys_3}\neq\emptyset$.
Processes $b$ and $c$ (see Fig. \ref{fig:encoding}) are used as forwarders so that messages circulate as in Fig. \ref{fig:ws_p2p_tw_3}. Basically, process $b$ (resp., process $c$) goes through two phases, the first one in which messages are sent to $a$ and $c$ (resp., $a$ and $b$), and the second in which messages can be received. In Fig. \ref{fig:encoding}, there should be one state $\ell_{b_0}^m$ (resp., $\ell_{b_?}^m$), which is the in and out-neighbor of $\ell_b^0$ (resp.,  $\ell_b^?$), per message $m \in \Msg \cup \{D\}$.
Formally,   $A_b= (Loc_b, \delta_b, \ell_b^0, \ell_b^{acc})$ where 
\begin{align*}
Loc_b = & \{\ell_b^0,  \ell_b^?,  \ell_b^{acc}\} \cup \{\ell_{b_0}^m, \ell_{b_?}^m \mid m \in \Msg \cup \{D\} \} \\
\delta_b= & \{ (\ell_b^0, \varepsilon,  \ell_b^?), ( \ell_b^?, \varepsilon, \ell_b^{acc})\} ~\cup~ 
  \{ (\ell_b^0, \send{b}{a}{m}, \ell_{b_0}^m), (\ell_{b_0}^m, \send{b}{c}{m}, \ell_b^0), \\
 & \quad\quad ( \ell_b^?, \rec{b}{a}{m}, \ell_{b_?}^m),(\ell_{b_?}^m, \rec{b}{c}{m},  \ell_b^?) \mid m\in \Msg\cup \{D\} \}
\end{align*} 
and symmetrically  $A_c= (Loc_c, \delta_c, \ell_c^0, \ell_c^{acc})$ where 
\begin{align*}
Loc_c = & \{\ell_c^0,  \ell_c^?,  \ell_c^{acc}\} \cup \{\ell_{c_0}^m, \ell_{c_?}^m \mid m \in \Msg\cup \{D\} \} \\
\delta_c = & \{ (\ell_c^0, \varepsilon, \ell_c^?), (\ell_c^?, \varepsilon, \ell_c^{acc})\} \cup 
 \{ (\ell_c^0, \send{c}{b}{m}, \ell_{c_0}^m), (\ell_{c_0}^m, \send{c}{a}{m}, \ell_c^0), \\
 & \quad\quad (\ell_c^?, \rec{c}{b}{m}, \ell_{c_?}^m),  (\ell_{c_?}^m, \rec{c}{a}{m}, \ell_c^?) \mid m\in \Msg\cup \{D\} \}.
\end{align*}

Process $a$  mimics the behavior of $A$. Fig. \ref{fig:automata_conversion} shows an example of how $A_a$ is built, starting from $A$. At a high level, $A_a$ is composed of two parts: the first simulates $A$, and the second (after state $\ell^{D}_a$) receives all messages sent by $b$ and $c$.
In the first part of $A_a$, each send action of $A$ is replaced by a send action addressed to process $b$, and each reception of $A$ is replaced by a send action to process $c$. We then use some dummy messages to ensure that our encoding works properly. Roughly, we force $A_a$ to send a dummy message to $b$ after each message sent to $c$, and we let $A_a$ send any number of dummy messages to $c$ right before each message sent to $b$, or right before entering the "receiving phase" of $A_a$, where messages from $b$ and $c$ are received.
Similarly, after $A_a$ sends a dummy message to $b$, it is also allowed to send two other dummy messages (the first one to $c$ and the second one to $b$) an unbounded number of times. 
Formally, $A_a= (Loc_a, \delta_a, \ell^0, \ell_a^{acc})$, where: 
\begin{align*}
    Loc_a =  &Loc \cup \{\ell_{t_1}, \ell_{t_2} \mid t=(\ell, ?m, \ell') \in \delta\} \cup \\
    &\{ \ell^{D}_a, \ell_a^?,  \ell_a^{acc}\} \cup \{ \ell_{a_?}^m \mid m \in \Msg\cup \{D\} \} \\
    \delta_a = &\{  (\ell, \send{a}{b}{m}, \ell'), (\ell, \send{a}{b}{D}, \ell)
    \mid  (\ell, !m, \ell') \in \delta \} 
    \cup \\
    & \{  (\ell, \send{a}{c}{m}, \ell_{t_1}),
          (\ell_{t_1}, \send{a}{b}{D}, \ell_{t_2}), \\
    &\quad\quad (\ell_{t_2}, \send{a}{c}{D}, \ell_{t_1}),
                (\ell_{t_2}, \varepsilon, \ell') 
                \mid  t=(\ell, ?m, \ell') \in \delta \} 
    \cup \\
    & \{  (\ell, \varepsilon, \ell') \mid  (\ell, \varepsilon, \ell') \in \delta \}
     \cup \\
    & \{  (\ell^{acc}, \varepsilon, \ell^{D}_a), (\ell^{D}_a, \send{a}{c}{D}, \ell^{D}_a) \} \cup \\
   &  \{ (\ell^{D}_a, \varepsilon, \ell_a^?), (\ell_a^?, \varepsilon, \ell_a^{acc}) \} \cup \\
    & \{  (\ell_a^?, \rec{a}{c}{m}, \ell_{a_?}^m),  (\ell_{a_?}^m, \rec{a}{b}{m}, \ell_a^?) \mid m\in \Msg\cup \{D\} \} \\
\end{align*}
In Fig. \ref{fig:automata_conversion}, colors  show the mapping of states from an instance of  $A$ to the corresponding automaton $A_a$. Fig.~\ref{fig:s3_acc_run} illustrates an accepting run of some system $\Sys_1$ and one of the corresponding accepting runs of the associated $\Sys_3$.

\begin{figure}[t]
    \centering
    \includegraphics[width=\textwidth]{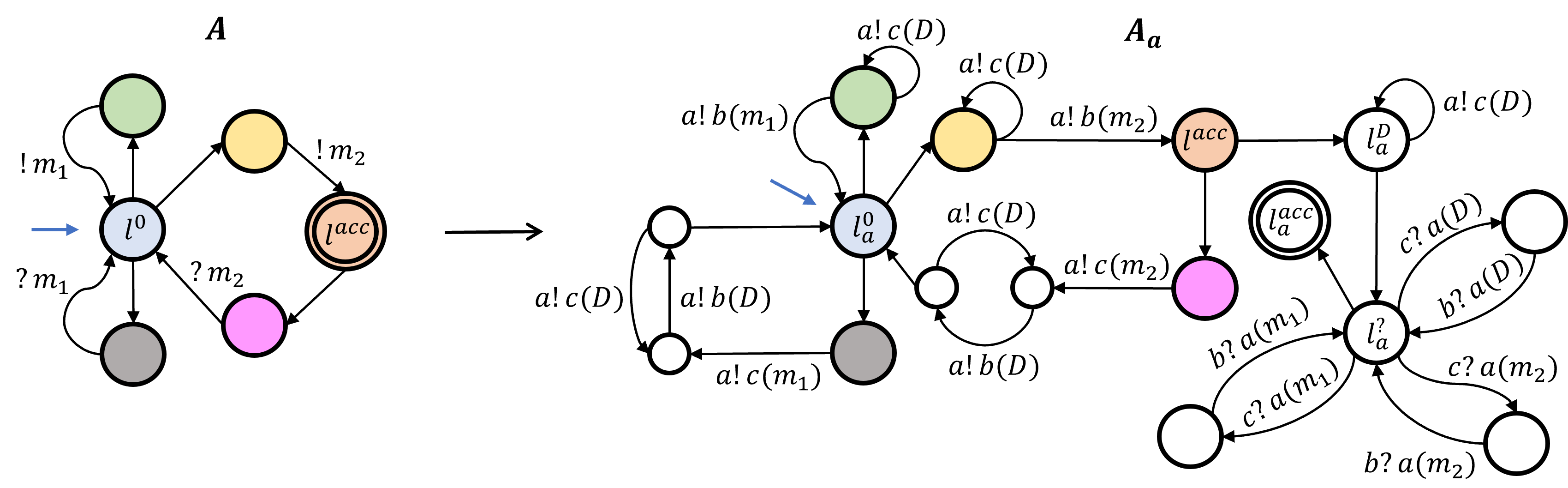}
    \caption{The automaton $A_a$ for the system $\Sys_3$, built from the automaton $A$ of $\Sys_1$. Arcs without actions represent $\epsilon$ transitions.}
    \label{fig:automata_conversion}
\end{figure}
 
Given a sequence of actions $!m$ and $?m$, where $m$ can be any message, we call it a FIFO sequence if
\begin{enumerate*}[label={(\roman*)}]
    \item all messages are received in the order in which they are sent, and
    \item no message is received before being sent.
\end{enumerate*}
We relax this definition to  talk about sequences of send actions $\send{a}{b}{m}$ and $\send{a}{c}{m}$ taken by $a$ (in the first part of the automaton $A_a$); in particular, we say that such a sequence $\gamma'$ is FIFO if, when interpreting each $\sact{a}{b}{m}$ and $\sact{a}{c}{m}$ action as $!m$ and $?m$, respectively, $\gamma'$ is a FIFO sequence. Dummy messages are  used to enforce that the sequence of send actions taken by $A_a$ in an accepting run of $\Sys_3$ is FIFO.


\begin{figure}[t]
    \centering
    \includegraphics[width=0.4\textwidth]{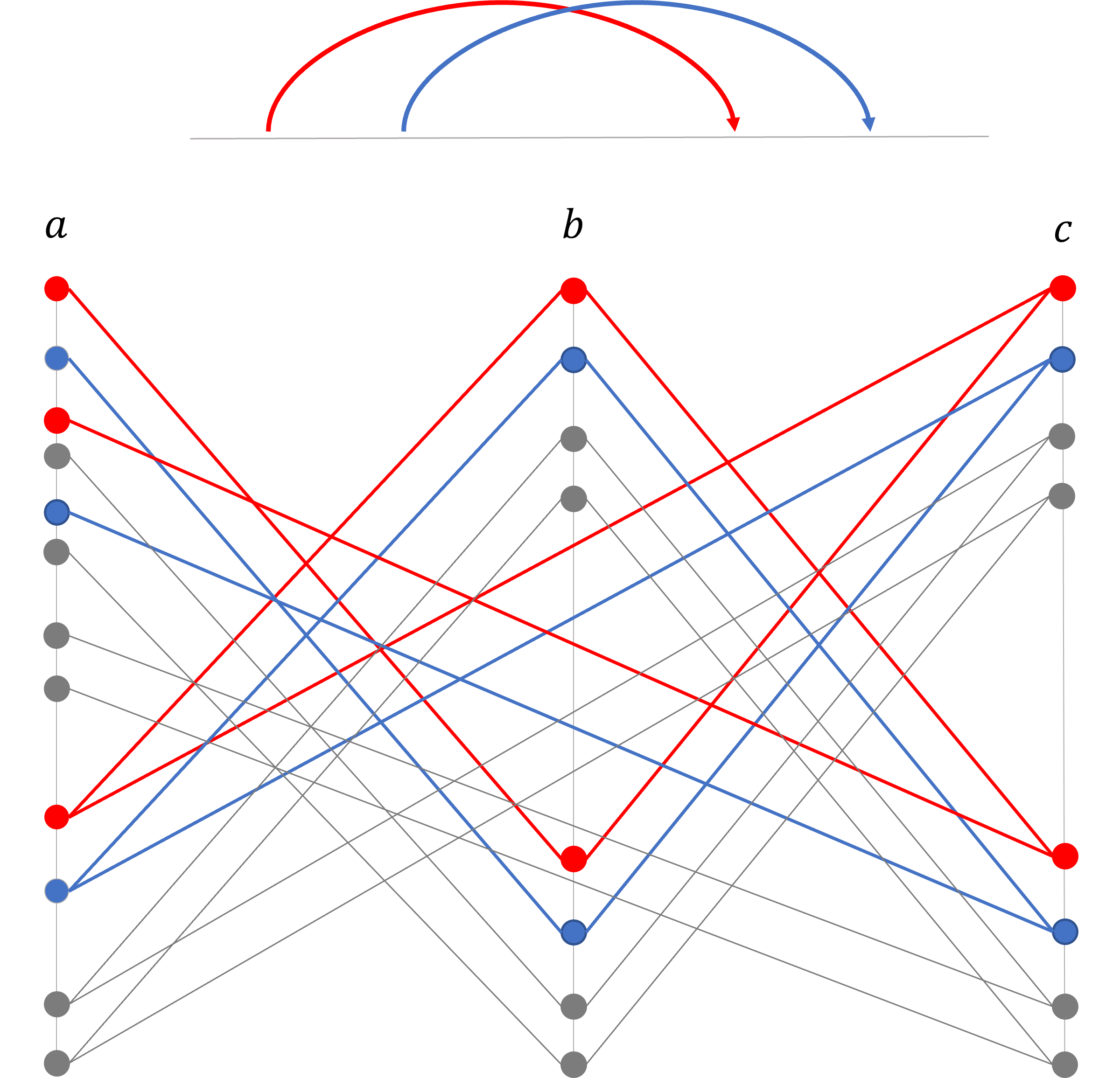}
    \caption{Above, a run with two messages for some system $\Sys_1$ with a single process (timeline drawn horizontally). Below, one possible corresponding MSC realized by the associated $\Sys_3$. Gray lines correspond to dummy messages.}
    \label{fig:s3_acc_run}
\end{figure}

\begin{restatable}{theorem}{undReach}
\label{thm:und_reach}
    There is an accepting run of $\Sys_1$ if and only if there is an accepting run of $\Sys_3$.
\end{restatable}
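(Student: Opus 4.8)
The plan is to prove the two implications separately, both resting on a single rigidity property of the send sequences of any accepting run of $\Sys_3$. For two distinct processes $x,y\in\{a,b,c\}$, let $w_{xy}$ denote the word of messages carried by the send events of $x$ addressed to $y$, listed in the order $x$ executes them. First I would record the local shape of the three machines: by construction $A_b$ emits its sends in matched pairs $(\send{b}{a}{m},\send{b}{c}{m})$ and its receptions in matched pairs $(\rec{b}{a}{m},\rec{b}{c}{m})$, and symmetrically for $A_c$ and for the receiving part of $A_a$ (pairs $(\rec{a}{c}{m},\rec{a}{b}{m})$). Since each of the six point-to-point channels is FIFO (Condition~(4) of Definition~\ref{def:msc}) and an accepting run is orphan-free (every send is matched, and the paired reception gadgets then force each machine to drain, in equal-length interleavings, the two words addressed to it before reaching its final state), the paired sends give $w_{ba}=w_{bc}$ and $w_{cb}=w_{ca}$, while the paired receptions give $w_{ab}=w_{cb}$, $w_{bc}=w_{ac}$ and $w_{ca}=w_{ba}$. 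Chaining these equalities, all six words coincide; in particular
\[
w_{ab}=w_{ac}=:\sigma.
\]
This equality is the heart of the argument: the word $a$ sends to $b$ and the word $a$ sends to $c$ are literally equal, dummies included.

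\textbf{From $\Sys_3$ to $\Sys_1$.} Given an accepting run of $\Sys_3$, I would take the first part of $A_a$ and project it onto a path of $A$ by decoding each $\send{a}{b}{m}$ as $!m$, each $\send{a}{c}{m}$ as $?m$, and erasing the $\varepsilon$-moves and every action carrying the dummy $D$; the gadgets are designed so that this projection is a legal path from $\ell^0$ to $\ell^{acc}$ in $A$. It remains to check it is a genuine FIFO run. The non-$D$ letters of $w_{ab}$ are exactly the pushes of $A$ in order and the non-$D$ letters of $w_{ac}$ are exactly its pops in order; deleting the $D$'s from both sides of $w_{ab}=w_{ac}$ shows the push word equals the pop word. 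For the ordering half I would prove the invariant that along $a$'s timeline the number of sends to $b$ never drops below the number of sends to $c$. In the first part of $A_a$ every send to $c$ is immediately followed by a send to $b$ (pop gadgets have the shape $(cb)^{+}$ and push gadgets emit only $b$'s), so a first violation could only occur right after the real pop $\send{a}{c}{m}$ opening some pop gadget, at a moment where the two counts are equal, say both $t$. But the immediately following send to $b$ is then the mandatory dummy $\send{a}{b}{D}$, which is the $(t{+}1)$-th send to $b$ while the real pop is the $(t{+}1)$-th send to $c$; this makes the $(t{+}1)$-th letter of $w_{ab}$ equal $D$ and that of $w_{ac}$ equal $m\neq D$, contradicting $w_{ab}=w_{ac}$. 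Hence no pop overtakes its matching push, the decoded word is a FIFO sequence, and $A$ accepts.

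\textbf{From $\Sys_1$ to $\Sys_3$.} Conversely, from an accepting run of $A$, i.e.\ a FIFO sequence of pushes $p_1\cdots p_k$ and pops $q_1\cdots q_k$ with $q_i=p_i$ and each pop scheduled after its push, I would first choose dummy insertions in $A_a$ so that the word sent to $b$ and the word sent to $c$ become one and the same $\sigma$: pad with $\send{a}{b}{D}$ before pushes and with the $(\send{a}{c}{D},\send{a}{b}{D})$ loops after pops to align the real letters at identical positions, and close with the $\send{a}{c}{D}$'s at $\ell^D_a$ to equalise the lengths. I would then let $b$ and $c$ emit exactly $\sigma$ as well (legitimate since $A_b,A_c$ produce arbitrary paired words), so that all six channel words equal $\sigma$; this makes every paired reception consistent and leaves no unmatched send. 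Scheduling all sends before all receptions exhibits the run as a single weakly synchronous phase, and the fact that the arcs between any two timelines are parallel makes the MSC \pp, exactly as in Lemma~\ref{lem:p2pwsMSC}. All accepting conditions then hold, so $\Sys_3$ accepts.

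\textbf{Main obstacle.} The delicate point is not the equality $\sigma=w_{ab}=w_{ac}$, which is bookkeeping on FIFO channels, but the ordering half of the FIFO property in the backward direction: a priori $a$ could emit a pop $\send{a}{c}{m}$ before its matching push $\send{a}{b}{m}$, decoding to an illegal ``pop from the empty queue''. The construction defeats this through the mandatory dummy $\send{a}{b}{D}$ after every pop: combined with $w_{ab}=w_{ac}$ it pins a real pop and a dummy to the same position of $\sigma$ whenever a pop would run ahead, which is impossible. Making this counting argument airtight—treating the trailing $\send{a}{c}{D}$ block separately so the invariant only has to be checked inside the first part, and pinning down the exact alignment of the two words in the forward construction—is where I expect the real work to lie.
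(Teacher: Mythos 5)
Your overall strategy is the same as the paper's: the same reduction, the same three pillars (all six channel words coincide, no ``pop'' overtakes its ``push'' in $!\alpha^\mu_a$, and any FIFO send sequence of $A_a$ extends to an accepting run with $b$ and $c$ forwarding), and the same dummy-padding idea in the forward direction. Where you genuinely diverge is in how you prove the two structural lemmas for the backward direction. Your chain of equalities $w_{ab}=w_{cb}=w_{ca}=w_{ba}=w_{bc}=w_{ac}$, obtained from the paired-send/paired-receive gadgets plus FIFO channels and orphan-freeness, is a cleaner and more global derivation of what the paper proves as Lemma~\ref{lem:send_order}. Likewise, your counting invariant ($\#$sends-to-$b$ never drops below $\#$sends-to-$c$, with a first violation pinning a real message of $w_{ac}$ against a $D$ of $w_{ab}$ at the same position) replaces the paper's rather intricate case analysis in Lemma~\ref{lem:no_rec_bef_send}; it is correct in spirit, but you must also handle a first violation occurring at a \emph{dummy} send to $c$ (inside a pop gadget's $(\sact{a}{c}{D}\,\sact{a}{b}{D})$ loop, or in the padding before a push, or in the trailing block at $\ell^D_a$): in the first case the alternation shows no violation can start there, in the second the $(t{+}1)$-th letter of $w_{ab}$ is the real pushed message $m\neq D$ so the same contradiction applies, and the trailing block is excluded because $|w_{ab}|=|w_{ac}|$. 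These are exactly the cases you promise to ``treat separately,'' so this is a completable sketch rather than a gap.

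One concrete point needs fixing in your forward direction: you propose to ``pad with $\send{a}{b}{D}$ before pushes,'' but the dummies inserted before a push must go to $c$, not to $b$ --- they are the ``pops'' that consume the $D$'s which the mandatory post-pop $\send{a}{b}{D}$'s have already injected into $w_{ab}$; padding $w_{ab}$ with further $D$'s there would break, not restore, the alignment $w_{ab}=w_{ac}$. (The paper's own displayed $\delta_a$ contains the same slip, $(\ell,\send{a}{b}{D},\ell)$, contradicting its prose and its Algorithm~\ref{alg:complete_dummies}, which emit $\sact{a}{c}{D}$ before each $\sact{a}{b}{x}$; you appear to have inherited the typo.) Beyond that, your forward sketch asserts but does not establish that a position-by-position alignment of the push word and the pop word is always achievable \emph{within the gadget discipline} (dummies may only be inserted at specific program points, and each position's send to $b$ must precede its send to $c$); this is precisely the content of the paper's Algorithm~\ref{alg:complete_dummies} together with Lemmata~\ref{lem:alg_realiz_seq}--\ref{lem:alg_termination}, which simulate the queue of $A$ enriched with $D$'s and rotate leading dummies after each pop. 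You correctly identify this as where the real work lies, but as written that half of the proof is still an outline.
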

\begin{sketch}
    We only provide a sketch of the proof, which is quite convoluted and requires several intermediate lemmata. The full proof is  in Appendix~\ref{apx:proofs}.

    \noindent($\Rightarrow$) We design Algorithm \ref{alg:complete_dummies}, which  takes an accepting run $\sigma$ of $S_1$, and returns an accepting run $\mu$ for $S_3$. At a high level, Algorithm \ref{alg:complete_dummies} takes the sequence of actions taken by $A$ in $\sigma$, rewrites each $!m$ and $?m$ action as $\sact{a}{b}{m}$ and $\sact{a}{c}{m}$, and then adds some actions related to dummy messages. We first show that the sequence of actions $\gamma'$ returned by Algorithm \ref{alg:complete_dummies} is a sequence of send actions that takes $A_a$ of $\Sys_3$ from state $\ell^0$ to $\ell^{acc}$ (note that this is not the final state of $A_a$, see Fig.~\ref{fig:automata_conversion} for an example). We then show that $\gamma'$ is a FIFO sequence, and prove that there exists an accepting run of $\Sys_3$ in which $A_a$ starts by executing exactly the sequence of actions in $\gamma'$.
    Finally, we show that Algorithm \ref{alg:complete_dummies} always terminates.

    \noindent ($\Leftarrow$) Let $\mu$ be an accepting run of $\Sys_3$, from which we show that it is easy to build a sequence of actions $\gamma$ taken by $A$ in an accepting run of $\Sys_1$. Let $\gamma'$ be the sequence of send actions taken by $A_a$ in the accepting run $\mu$. The first step is to show that $\gamma'$ is a FIFO sequence. The three automata $A_a$, $A_b$, and $A_c$ are built so to ensure that $\gamma'$ is always a FIFO sequence. This is closely related to the shape of the MSCs associated to accepting runs of $\Sys_3$; these MSCs exploit the same kind of pattern seen in Section~\ref{sec:tw_ws_pp} to bounce messages back and forth between the three processes. We then prove that, if we ignore actions related to dummy messages in $\gamma'$ and interpret each $\sact{a}{b}{m}$ and $\sact{a}{c}{m}$ action as $!m$ and $?m$, we get a sequence of actions $\gamma$ that takes $A$ from its initial state $\ell^0$ to its final state $\ell^{acc}$ in an accepting run of $\Sys_1$.
\end{sketch}

    \label{sec:undecidability}
    
\begin{algorithm}
\caption{Let $\sigma$ be an accepting run of $\Sys_1$, and $\alpha^\sigma$ be the sequence of $n$ actions taken by $A$ in $\sigma$. We use $\alpha^\sigma(i)$ to denote the $i$-th action of $\alpha^\sigma$.}
    \label{alg:complete_dummies}
    \begin{multicols}{2}
    \begin{algorithmic}[1]
    \STATE $\gamma'$ $\gets$ empty list
    \STATE Queue $\gets$ empty queue
    \FOR{$i$ from $1$ to $n$}
    \STATE $action$ $\gets$ $\alpha^{\sigma}(i)$
    \IF{$action = !x$}
        \WHILE{first(Queue) = $D$}
            \STATE add $\sact{a}{c}{D}$ to $\gamma'$
            \STATE dequeue $D$ from Queue
        \ENDWHILE
        \STATE add $\sact{a}{b}{x}$ to $\gamma'$
        \STATE enqueue $x$ in Queue
    \ELSIF{$action = ?x$}
        \STATE add $\sact{a}{c}{x}$ to $\gamma'$
        \STATE dequeue $x$ from Queue
        \STATE add $\sact{a}{b}{D}$ to $\gamma'$
        \STATE enqueue $D$ in Queue
        \IF{Queue does not contain only $D$}
            \WHILE{first(Queue) = $D$}
                \STATE add $\sact{a}{c}{D}$ to $\gamma'$
                \STATE dequeue $D$ from Queue
                \STATE add $\sact{a}{b}{D}$ to $\gamma'$
                \STATE enqueue $D$ in Queue
            \ENDWHILE
        \ENDIF
    \ENDIF
    \ENDFOR
    \WHILE{first(Queue) = $D$}
        \STATE add $\sact{a}{c}{D}$ to $\gamma'$
        \STATE dequeue $D$ from Queue
    \ENDWHILE
    \RETURN $\gamma'$;
    \end{algorithmic}
    \end{multicols}
\end{algorithm}

The following result immediately follows from Lemma~\ref{lem:FIFO-automaton-emptiness-is-not-decidable}
and Theorem~\ref{thm:und_reach}.

\begin{theorem}\label{thm:main-theorem}
    The emptiness problem for weakly synchronous communicating systems with three processes is undecidable.
\end{theorem}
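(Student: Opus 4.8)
\section*{Proof plan for Theorem~\ref{thm:main-theorem}}

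The plan is to present the theorem as an effective many-one reduction from the emptiness problem for FIFO automata, which is undecidable by Lemma~\ref{lem:FIFO-automaton-emptiness-is-not-decidable}. Given an arbitrary FIFO automaton $\Sys_1 = (A)$, I would first normalize $A$ so that every state has at most one non-$\varepsilon$ outgoing transition and no self-loop, using the language-preserving encoding of Appendix~\ref{apx:epsilon}; this normalization is clearly computable. I would then apply the explicit construction above to obtain the three-machine system $\Sys_3 = (A_a, A_b, A_c)$ over $\Msg \cup \{D\}$. The crucial observation is that the map $\Sys_1 \mapsto \Sys_3$ is a computable function: $A_b$ and $A_c$ are fixed up to the message alphabet, and $A_a$ is produced from $A$ by a purely local, syntactic rewriting of each transition (replacing $!m$ and $?m$ by sends to $b$ and $c$ and inserting the dummy-message gadgets). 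Hence the whole reduction is effective.

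Second, I would verify that $\Sys_3$ actually belongs to the class on which the decision problem is posed, i.e.\ that every $\msc \in \Lang{\Sys_3}$ is weakly synchronous (in fact with a single phase). This follows from the shape of the three automata: along every control path, each of $A_a$, $A_b$, $A_c$ performs all of its send actions strictly before any receive action (in $A_a$, the simulation of $A$ together with the trailing $\send{a}{c}{D}$ loop exhausts all sends before the receiving phase at $\ell_a^?$; the structure of $A_b$, $A_c$ is analogous). Consequently, for any such $\msc$ and any send event $s$, every $\le_\msc$-predecessor of $s$ is again a send: a predecessor via $\lhd$ is impossible, since message edges point to receive events, and an immediate predecessor via $\procrel$ lies earlier on the same timeline than the send $s$, hence is itself a send. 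Closing under immediate predecessors shows that $\SendEv{\msc}$ is $\le_\msc$-downward-closed, so $\msc$ is weakly synchronous with one phase in the sense of Definition~\ref{def:weaksync-new}.

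Third, I would invoke Theorem~\ref{thm:und_reach}, which is precisely the correctness statement of the construction: there is an accepting run of $\Sys_1$ iff there is an accepting run of $\Sys_3$, equivalently $\Lang{\Sys_1} \neq \emptyset$ iff $\Lang{\Sys_3} \neq \emptyset$. Combining the three ingredients, the computable map $\Sys_1 \mapsto \Sys_3$ reduces FIFO-automaton emptiness to emptiness of weakly synchronous three-machine systems. If the latter were decidable, composing a decision procedure for it with the reduction would decide the former, contradicting Lemma~\ref{lem:FIFO-automaton-emptiness-is-not-decidable}; hence the emptiness problem for weakly synchronous systems with three processes is undecidable.

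Regarding the \emph{main obstacle}: once Theorem~\ref{thm:und_reach} is in hand, the present theorem is essentially immediate, so the only genuine obligations discharged here are that the reduction is effective and that $\Sys_3$ indeed lies in the weakly synchronous class. The real difficulty is encapsulated in Theorem~\ref{thm:und_reach} itself --- in particular its forward direction, where dummy messages must be interleaved (via Algorithm~\ref{alg:complete_dummies}) so that the send sequence of $A_a$ stays a FIFO sequence and the induced MSC is realizable under the p2p discipline --- but that development is assumed at this point.
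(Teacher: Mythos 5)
Your proposal is correct and follows essentially the same route as the paper: Theorem~\ref{thm:main-theorem} is obtained immediately by composing the effective reduction $\Sys_1 \mapsto \Sys_3$ with Lemma~\ref{lem:FIFO-automaton-emptiness-is-not-decidable} and Theorem~\ref{thm:und_reach}. The extra checks you make explicit (computability of the reduction and the single-phase weak synchrony of $\Sys_3$, via downward-closure of the send events) are exactly the obligations the paper discharges in the surrounding text, so nothing is missing.
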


Notice that our results extend to  causally ordered (CO) communication, since an MSC is weakly synchronous if and only if it is weakly synchronous CO (see Appendix \ref{apx:ws_p2p_co}).

\begin{corollary}
The emptiness problem for causal order communicating systems with three processes is undecidable.
\end{corollary}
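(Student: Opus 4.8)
The plan is to reuse the reduction of Theorem~\ref{thm:main-theorem} verbatim and merely reinterpret the three-process system $\Sys_3$ under the CO communication model. Everything hinges on the single combinatorial fact stated just above and proved in Appendix~\ref{apx:ws_p2p_co}: a weakly synchronous MSC is a p2p MSC if and only if it is a CO MSC. Granting this, I would show that the set of MSCs accepted by $\Sys_3$ is the same whether the transport layer is p2p or CO, so that the equivalence $\Lang{\Sys_1}\neq\emptyset \Longleftrightarrow \Lang{\Sys_3}\neq\emptyset$ of Theorem~\ref{thm:und_reach} survives the change of communication model.

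The key steps, in order, are as follows. First, I would recall that the CO delivery constraint subsumes the p2p one: two sends with a common source and destination that are ordered on the sender's timeline are in particular ordered by the causal order $\le_\msc$, so Condition~(4) of Definition~\ref{def:msc} is a special case of the CO constraint. Consequently every CO MSC is a p2p MSC, and the MSCs accepted by $\Sys_3$ under CO form a subset of those accepted under p2p. Second, for the reverse inclusion I would invoke the fact (established in the construction) that $\Sys_3$ is weakly synchronous: every MSC accepted under p2p is weakly synchronous, hence, by the equivalence, is also a CO MSC, and is therefore accepted under CO as well. These two inclusions give that the p2p-language and the CO-language of $\Sys_3$ coincide; in particular one is empty exactly when the other is, and the same equivalence certifies that $\Sys_3$, read under CO, is a genuine \emph{weakly synchronous} CO system with three processes.

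Chaining the equivalences then closes the argument: $\Lang{\Sys_1}\neq\emptyset$ iff the p2p-language of $\Sys_3$ is nonempty (Theorem~\ref{thm:und_reach}), iff the CO-language of $\Sys_3$ is nonempty (language coincidence). Since the emptiness problem for FIFO automata is undecidable (Lemma~\ref{lem:FIFO-automaton-emptiness-is-not-decidable}) and $\Sys_1 \mapsto \Sys_3$ is computable, this is a reduction witnessing undecidability of the emptiness problem for weakly synchronous CO systems with three processes; since such systems are in particular CO systems, the same reduction establishes the corollary for CO systems as stated.

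The only real content, and hence the main obstacle, is the deferred equivalence ``weakly synchronous p2p $\Longleftrightarrow$ weakly synchronous CO''. Since the CO constraint is the stronger one, the nontrivial direction is that a weakly synchronous p2p MSC already respects causal delivery, and I would prove it by exploiting the phase structure. For two sends $s_1 \le_\msc s_2$ with a common destination there are two cases. If they lie in different phases, the process orders inserted by concatenation already place their (necessarily same-phase) receives in the phase order, so delivery is causal for free. If they lie in the same phase, then since within a phase every send precedes every receive, no causal chain can leave a send, pass through a receive, and return to a send; hence the chain from $s_1$ to $s_2$ consists only of process edges, forcing $s_1$ and $s_2$ onto a single timeline, where Condition~(4) already guarantees causal delivery. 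Assembling these cases into a proof that the whole MSC is CO is where the care is needed; the surrounding transfer of undecidability is pure bookkeeping.
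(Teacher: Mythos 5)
Your proposal is correct and follows essentially the same route as the paper: the corollary is obtained by noting that a weakly synchronous MSC is \pp if and only if it is CO (Appendix~\ref{apx:ws_p2p_co}), so the reduction of Theorem~\ref{thm:und_reach} transfers unchanged, and your phase-based proof of that equivalence is the same argument the paper gives (within a phase a causal chain between two sends cannot pass through a receive, since the send events are downward-closed, so Condition~(4) of Definition~\ref{def:msc} already forces causal delivery). The only point to watch is the unmatched-send clause of the CO definition, which your sketch (like the paper's own appendix proof) does not fully treat, but this is immaterial for the corollary because accepting runs of $\Sys_3$ are orphan-free.
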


\section{Conclusion}\label{sec:concl}

We showed the undecidability of the reachability of a configuration for weakly synchronous systems with three processes or more. The main contribution lies in the technique used to achieve this result.
We first show that the treewidth of the class of weakly synchronous MSCs is unbounded, by proving that it is always possible to build such an MSC with an arbitrarily large grid as minor.
Then, a similar construction is employed to provide an encoding of a FIFO automaton into a weakly synchronous system with three processes, allowing to show that reachability of a configuration is undecidable.


\clearpage
\bibliography{bibfile}

\begin{thebibliography}{10}

\bibitem{DBLP:journals/fmsd/AkrounS18}
Lakhdar Akroun and Gwen Sala{\"{u}}n.
\newblock Automated verification of automata communicating via {FIFO} and bag
  buffers.
\newblock {\em Formal Methods Syst. Des.}, 52(3):260--276, 2018.

\bibitem{DBLP:conf/www/BasuB11}
Samik Basu and Tevfik Bultan.
\newblock Choreography conformance via synchronizability.
\newblock In Sadagopan Srinivasan, Krithi Ramamritham, Arun Kumar, M.~P.
  Ravindra, Elisa Bertino, and Ravi Kumar, editors, {\em Proceedings of the
  20th International Conference on World Wide Web, {WWW} 2011, Hyderabad,
  India, March 28 - April 1, 2011}, pages 795--804. {ACM}, 2011.

\bibitem{DBLP:journals/tcs/BasuB16}
Samik Basu and Tevfik Bultan.
\newblock On deciding synchronizability for asynchronously communicating
  systems.
\newblock {\em Theor. Comput. Sci.}, 656:60--75, 2016.

\bibitem{bodlaender:partialK}
Hans~L. Bodlaender.
\newblock A partial k-arboretum of graphs with bounded treewidth.
\newblock {\em Theor. Comput. Sci.}, 209(1–2):1–45, dec 1998.

\bibitem{BolligGFLLS21}
Benedikt Bollig, Cinzia {Di Giusto}, Alain Finkel, Laetitia Laversa,
  {\'{E}}tienne Lozes, and Amrita Suresh.
\newblock A unifying framework for deciding synchronizability.
\newblock In Serge Haddad and Daniele Varacca, editors, {\em 32nd International
  Conference on Concurrency Theory, {CONCUR} 2021, August 24-27, 2021}, volume
  203 of {\em LIPIcs}, pages 14:1--14:18, Virtual Conference, 2021. Schloss
  Dagstuhl - Leibniz-Zentrum f{\"{u}}r Informatik.

\bibitem{BolligGFLLS21-long}
Benedikt Bollig, Cinzia {Di Giusto}, Alain Finkel, Laetitia Laversa,
  {\'{E}}tienne Lozes, and Amrita Suresh.
\newblock A unifying framework for deciding synchronizability (extended
  version).
\newblock Technical report, HAL, 2021.
\newblock available at
  \url{https://hal.archives-ouvertes.fr/hal-03278370/document}.

\bibitem{DBLP:conf/cav/BouajjaniEJQ18}
Ahmed Bouajjani, Constantin Enea, Kailiang Ji, and Shaz Qadeer.
\newblock On the completeness of verifying message passing programs under
  bounded asynchrony.
\newblock In Hana Chockler and Georg Weissenbacher, editors, {\em Computer
  Aided Verification - 30th International Conference, {CAV} 2018, Held as Part
  of the Federated Logic Conference, FloC 2018, July 14-17, 2018, Proceedings,
  Part {II}}, volume 10982 of {\em Lecture Notes in Computer Science}, pages
  372--391, Oxford, UK, 2018. Springer.

\bibitem{DBLP:journals/jacm/BrandZ83}
Daniel Brand and Pitro Zafiropulo.
\newblock On communicating finite-state machines.
\newblock {\em J. {ACM}}, 30(2):323--342, 1983.

\bibitem{DBLP:journals/dc/Charron-BostMT96}
Bernadette Charron{-}Bost, Friedemann Mattern, and Gerard Tel.
\newblock Synchronous, asynchronous, and causally ordered communication.
\newblock {\em Distributed Comput.}, 9(4):173--191, 1996.

\bibitem{DBLP:journals/pacmpl/GiustoFLL23}
Cinzia {Di Giusto}, Davide Ferr{\'{e}}, Laetitia Laversa, and {\'{E}}tienne
  Lozes.
\newblock A partial order view of message-passing communication models.
\newblock {\em Proc. {ACM} Program. Lang.}, 7({POPL}):1601--1627, 2023.

\bibitem{DBLP:conf/icalp/FinkelL17}
Alain Finkel and {\'{E}}tienne Lozes.
\newblock Synchronizability of communicating finite state machines is not
  decidable.
\newblock In Ioannis Chatzigiannakis, Piotr Indyk, Fabian Kuhn, and Anca
  Muscholl, editors, {\em 44th International Colloquium on Automata, Languages,
  and Programming, {ICALP} 2017, July 10-14, 2017, Warsaw, Poland}, volume~80
  of {\em LIPIcs}, pages 122:1--122:14. Schloss Dagstuhl - Leibniz-Zentrum
  f{\"{u}}r Informatik, 2017.

\bibitem{reversal-bounded-revisited}
Alain Finkel and Arnaud Sangnier.
\newblock Reversal-bounded counter machines revisited.
\newblock In Edward Ochma{\'{n}}ski and Jerzy Tyszkiewicz, editors, {\em
  Mathematical Foundations of Computer Science 2008}, pages 323--334, Berlin,
  Heidelberg, 2008. Springer Berlin Heidelberg.

\bibitem{GKM07}
Blaise Genest, Dietrich Kuske, and Anca Muscholl.
\newblock On communicating automata with bounded channels.
\newblock {\em Fundam. Informaticae}, 80(1-3):147--167, 2007.

\bibitem{HeussnerLMS10}
Alexander Heu{\ss}ner, J{\'e}r{\^o}me Leroux, Anca Muscholl, and Gr{\'e}goire
  Sutre.
\newblock Reachability analysis of communicating pushdown systems.
\newblock In Luke Ong, editor, {\em Foundations of Software Science and
  Computational Structures}, pages 267--281, Berlin, Heidelberg, 2010. Springer
  Berlin Heidelberg.

\bibitem{ibarra1978reversal}
Oscar~H Ibarra.
\newblock Reversal-bounded multicounter machines and their decision problems.
\newblock {\em Journal of the ACM (JACM)}, 25(1):116--133, 1978.

\bibitem{messagesequencecharts}
ITU-T.
\newblock Recommendation itu-t z.120: Message sequence chart (msc).
\newblock Technical report, International Telecommunication Union, Geneva,
  February 2011.

\bibitem{DBLP:conf/fossacs/LohreyM02}
Markus Lohrey and Anca Muscholl.
\newblock Bounded {MSC} communication.
\newblock In Mogens Nielsen and Uffe Engberg, editors, {\em Foundations of
  Software Science and Computation Structures, 5th International Conference,
  {FOSSACS} 2002. Held as Part of the Joint European Conferences on Theory and
  Practice of Software, {ETAPS} 2002, April 8-12, 2002, Proceedings}, volume
  2303 of {\em Lecture Notes in Computer Science}, pages 295--309, Grenoble,
  France, 2002. Springer.

\bibitem{Mayr1981AnAF}
Ernst~W. Mayr.
\newblock An algorithm for the general petri net reachability problem.
\newblock In {\em Symposium on the Theory of Computing}, 1981.

\bibitem{DBLP:conf/tacas/TorreMP08}
Salvatore~La Torre, P.~Madhusudan, and Gennaro Parlato.
\newblock Context-bounded analysis of concurrent queue systems.
\newblock In {\em Tools and Algorithms for the Construction and Analysis of
  Systems, 14th International Conference, {TACAS} 2008, Held as Part of the
  Joint European Conferences on Theory and Practice of Software, {ETAPS} 2008,
  Budapest, Hungary, March 29-April 6, 2008. Proceedings}, pages 299--314,
  2008.

\end{thebibliography}
\newpage

\appendix

\section{Tree-decomposition and treewidth}\label{app:defs}

\begin{definition} 
    A \emph{ tree-decomposition} of a graph $G=(V,E)$ is a pair $(T, {\cal X}=\{ X_{t}\mid t \in V(T)\})$ such that $T$ is a tree, and ${\cal X}$ is a set of subsets of $V$, one for each node of $T$, such that:
    \begin{enumerate*}[label={(\roman*)}]
        \item $\bigcup_{t\in V(T)} X_{t} = V(G)$;
        \item for every $\{u,v\}\in E(G)$, there exists $t \in V(T)$ such that $u, v\in X_{t}$;
        \item for every $v \in V(G)$, the set $\{t \in V(T) \mid v \in X_t\}$ induces a subtree of $T$.
    \end{enumerate*}

    The \emph{width} of a tree decomposition $(T, {\cal X})$ is $\max_{t \in V(T)} |X_t|-1$, i.e., the size of the largest set $V$ minus one. The \emph{treewidth} $tw(G)$ of $G$ is the minimum width over all possible tree decompositions of $G$. 
\end{definition}

The following well-known result in graph theory gives a connection between the notions of treewidth and minor.

\begin{theorem}~\cite{bodlaender:partialK} 
\label{thm:minor_tw}
    If $G$ is a minor of $H$, then $tw(G) \le tw(H)$.
\end{theorem}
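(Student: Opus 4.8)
The plan is to decompose the notion of minor into its two atomic operations and show that neither increases treewidth. By definition, $G$ being a minor of $H$ means that $G$ is obtained from a subgraph of $H$ by a sequence of edge contractions. Since a tree-decomposition is specified bag by bag, it suffices to establish two monotonicity facts: (a) if $G$ is a subgraph of $H$ then $tw(G) \le tw(H)$, and (b) if $G$ is obtained from $H$ by contracting a single edge then $tw(G) \le tw(H)$. Applying (a) once and (b) finitely many times then yields the theorem.

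For (a), I would take an optimal tree-decomposition $(T, \mathcal{X})$ of $H$ and restrict every bag to the vertices of $G$, i.e. set $X'_t = X_t \cap V(G)$. The three defining conditions are inherited: every vertex of $G$ still lies in some bag, every edge of $G$ (being also an edge of $H$) is covered by some bag, and the set of nodes hosting a given vertex only shrinks, so it still induces a subtree. Since $|X'_t| \le |X_t|$ for every $t$, the width does not grow, giving $tw(G) \le tw(H)$.

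For (b), suppose $G$ arises from $H$ by contracting the edge $\{u,v\}$ into a fresh vertex $w$. Starting from an optimal tree-decomposition $(T, \mathcal{X})$ of $H$, I would build a decomposition $(T, \mathcal{X}')$ of $G$ on the \emph{same} tree $T$ by substituting $w$ for both $u$ and $v$: put $X'_t = (X_t \setminus \{u,v\}) \cup \{w\}$ whenever $X_t$ meets $\{u,v\}$, and $X'_t = X_t$ otherwise. Each bag loses up to two vertices and gains at most one, so $|X'_t| \le |X_t|$ and the width is unchanged. Coverage of vertices is immediate, and coverage of edges is routine: an edge of $G$ not incident to $w$ is an edge of $H$ and keeps its witnessing bag, while an edge $\{w, x\}$ of $G$ comes from an edge $\{u, x\}$ or $\{v, x\}$ of $H$, whose witnessing bag now contains $\{w, x\}$ after the substitution.

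The one point requiring care — and the crux of the argument — is the connectivity (subtree) condition for the new vertex $w$. By construction $\{t : w \in X'_t\} = T_u \cup T_v$, where $T_u = \{t : u \in X_t\}$ and $T_v = \{t : v \in X_t\}$ each induce a subtree of $T$. The union of two subtrees of a tree is again a subtree precisely when they share a node, and here they do: because $\{u,v\}$ was an edge of $H$, the edge-coverage condition of the original decomposition forces some bag $X_t$ to contain both $u$ and $v$, so $t \in T_u \cap T_v$. Hence $T_u \cup T_v$ is connected and induces a subtree, so all conditions hold. This is exactly where the adjacency of the contracted pair is used, and it is the step most easily overlooked. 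We conclude that $(T, \mathcal{X}')$ witnesses $tw(G) \le tw(H)$, and combined with (a) the theorem follows.
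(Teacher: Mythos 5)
Your proof is correct, but be aware that the paper does not actually prove this statement: Theorem~\ref{thm:minor_tw} is imported from the literature (Bodlaender's survey~\cite{bodlaender:partialK}) as a classical fact, and the appendix only recalls the definition of tree-decomposition needed to state it. What you wrote is the standard self-contained argument: factor the minor relation into its two elementary operations (passing to a subgraph, contracting a single edge), and show each is treewidth-non-increasing by transforming an optimal tree-decomposition. Your identification of the crux is exactly right: the host subtrees $T_u$ and $T_v$ of the endpoints of a contracted edge intersect, because some bag must witness the edge $\{u,v\}$, and the union of two subtrees of a tree sharing a node is again a subtree; this is precisely where adjacency of the contracted pair enters, and without it the construction would fail (contracting non-adjacent vertices can increase treewidth). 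One sentence of yours deserves repair, though it does not invalidate the proof: in part (a) you justify the subtree condition by saying the host sets ``only shrink,'' but a subset of a node set inducing a subtree need not induce a subtree. The correct observation is that for every vertex $v$ of $G$ the host set is \emph{unchanged}, since $v \in X_t \cap V(G)$ if and only if $v \in X_t$; connectivity is therefore inherited verbatim, not via a (false) monotonicity principle. With that sentence fixed, your argument is a complete and correct proof of the cited theorem.
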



\section{Automata with epsilon transitions}
\label{apx:epsilon}
Given a communicating automaton $A$, we build an equivalent one with epsilon-transitions such that 
\begin{enumerate*}[label={(\roman*)}]
    \item from each state there are either only epsilon-transitions or a single transition labeled with a letter from $\Sigma$,
    \item and there are no states with a transition that lands in the same state (i.e., a self-loop).
\end{enumerate*}

Let $A = (Loc, \delta, \ell^0, \ell^{acc})$ be a communicating automaton for process $p$.
Its encoding into an automaton with single non-epsilon transitions is the automaton $A^{\varepsilon} = (Loc^{\varepsilon}, \delta^{\varepsilon}, \ell^{0}, \ell^{acc})$ where 
\begin{itemize}
\item $Loc^{\varepsilon} = Loc \cup \{\ell^t \mid t \in \delta\}$
\item 
$\delta^{\varepsilon} =  \{(\ell, \varepsilon, \ell_t), (\ell_t, a, \ell') \mid t = (\ell,a,\ell') \in \delta \} $

 \end{itemize}
Let $\Sys^{\varepsilon}$ be the system obtained from $\Sys$ where each for each of the processes $p \in \Procs$ we take the corresponding encoding $A_p^{\varepsilon}$.

Immediately from the definition of accepting run we can see that $\Lang{\Sys^{\varepsilon}} = \Lang{\Sys}$.

\section{Proofs for Section \ref{sec:undecidability}}
\label{apx:proofs}
This section is devoted to the proof of Theorem~\ref{thm:und_reach}, which we restate below.

\undReach*
\begin{proof}
    ($\Rightarrow$) Follows from Lemma~\ref{lem:acc_single_to_three}, which uses Lemmata~\ref{lem:fifo_realiz} to \ref{lem:alg_termination}. 
    
    \noindent ($\Leftarrow$) Follows from Lemma~\ref{lem:acc_m_iff_c}, which uses  Lemmata~\ref{lem:send_order} and \ref{lem:no_rec_bef_send}. All Lemmata are stated and proved below.
    \qed
\end{proof}

Let $\mu$ be an accepting run of a system $\Sys$, over a set $\Msg$ of messages and a set $\Procs$ of processes, on an MSC $\msc = (\Events,\procrel,\lhd,\lambda)$. For $p \in \Procs$, we will use $\alpha_p^{\mu}$ to denote the sequence of actions, ignoring $\varepsilon$-actions, taken by $A_p$ in the run $\mu$; more formally, $\alpha_p^{\mu}$ is the sequence of actions in $\{\tlabel(t) \;|\; t \in \mu(e), e \in \Events_p,\tlabel(t) \neq \varepsilon\}$, ordered according to the $\procrel_P$ relation, i.e., given $a_1=\tlabel(\mu(e_1))$, $b_2=\tlabel(\mu(e_2))$, such that $e_1 \procrel_p e_2$, then $a_1$ is right before $a_2$ in $\alpha_p^{\mu}$, which we abbreviate as $a_1 \xdashrightarrow{p} a_2$ (or simply $a_1 \dashrightarrow a_2$, when the process $p$ is clear from the context). 
The lightweight notation $\alpha_p$ will be used when the run $\mu$ is clear from the context, and we omit $p$ when the system only has one process. The $j$-th action in $\alpha_p^{\mu}$ will be denoted by $\alpha_p^{\mu}(j)$. 
In this section, when talking about an accepting run of a system $\Sys = ((A_p)_{p\in\procSet})$ on an MSC $M$, we will often not even mention $M$, and only focus on the sequence of actions taken by the automata $A_p$.

Let $\Sys_1 = (A)$ be any communicating system with a single process and one queue, and  $\Sys_3 = (A_a, A_b, A_c)$ be the weakly synchronous system obtained from $\Sys_1$ with the reduction described in Section~\ref{sec:reach}.  For $p \in \{a,b,c\}$, we  use  $!\alpha_p^{\mu}$ and $?\alpha_p^{\mu}$ to denote the sequence of send actions and, respectively, receive actions taken by $A_p$. Note that $\alpha_p^{\mu} = !\alpha_p^{\mu} + ?\alpha_p^{\mu}$, where $+$ is the concatenation of two sequences, since $\Sys_3$ is a weakly synchronous system with one phase. 
\begin{restatable}{lemma}{fifoRealiz}
    \label{lem:fifo_realiz}
        Let $\gamma=a_1 \ldots a_k$ be a sequence of send actions taken by $A_a$ to get from $\ell_a^0$ to $\ell^?_a$, where $A_a$ is also allowed to take extra $\varepsilon$-actions in any state. If $\gamma$ is a FIFO sequence, there is an accepting run $\mu$ of $\Sys_3$ such that $!\alpha^\mu_a=\gamma$.
\end{restatable}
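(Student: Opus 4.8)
The plan is to exhibit the accepting run $\mu$ explicitly: $A_a$ replays $\gamma$ during its (unique) sending phase, and $A_b,A_c$ act as forwarders that echo $A_a$'s messages. Write $\beta=(\beta_1,\dots,\beta_n)$ for the subsequence of messages carried by the $\send{a}{b}{\cdot}$ actions of $\gamma$, and $\delta=(\delta_1,\dots,\delta_{n'})$ for those carried by the $\send{a}{c}{\cdot}$ actions. Reading each $\send{a}{b}{x}$ as a push $!x$ and each $\send{a}{c}{x}$ as a pop $?x$, the FIFO hypothesis says precisely that every message placed on the channel to $b$ is later matched, in the same order, by an equal message on the channel to $c$; hence $n'=n$ and $\delta_i=\beta_i$ for every $i$, i.e.\ both subsequences spell the same word $w=(w_1,\dots,w_n)$ with $w_i=\beta_i=\delta_i$. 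This single fact is the heart of the proof.

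Next I would fix the behaviour of the two forwarders. The shape of $A_b$ forces it to emit, in its $i$-th sending cycle, the same message both to $a$ and to $c$, and to consume in its $i$-th receiving cycle the same message both from $a$ and from $c$; $A_c$ and the receiving phase of $A_a$ are rigid in the same way. I therefore let $A_b$ perform $n$ sending cycles emitting $w_i$ to $a$ and to $c$, then $n$ receiving cycles; let $A_c$ perform $n$ sending cycles emitting $w_i$ to $b$ and to $a$, then $n$ receiving cycles; and let $A_a$, after executing $\gamma$, perform $n$ receiving cycles consuming $w_i$ first from $c$ then from $b$, before the closing $\varepsilon$-transitions to $\ell_a^{acc}$. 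Finally I define $\lhd$ by matching, on each of the six ordered channels, the $i$-th send event to the $i$-th receive event.

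It then remains to check that this data is an accepting run. Conditions~(1)--(2) of Definition~\ref{def:msc} are immediate from the labelling; condition~(4) (p2p) holds because on each channel sends are matched to receives in FIFO order; and condition~(3) holds because every $\lhd$-edge points from a send to a receive while, on each timeline, all sends precede all receives, so scheduling the entire sending phase before the entire receiving phase linearises $\le_\msc$ and exhibits it as a partial order. The same separation shows that $\SendEv{\msc}$ is $\le_\msc$-downward closed, so $\msc$ is weakly synchronous with a single phase. For the accepting-run conditions (i)--(vi), each of $A_a,A_b,A_c$ reaches its final state through the prescribed cycles, and the equality $\beta=\delta$ guarantees that on every one of the six channels the sent and received words have equal length and equal messages, so in particular $\Unm{\msc}=\emptyset$. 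The main obstacle is precisely this last point: since $A_b$, $A_c$ (and the receiving phase of $A_a$) are constrained to send/receive the same message to/from both of their two partners, the six channels cannot be matched independently, and one must verify that a single word $w$ can be threaded consistently around all of them. The equality $\beta=\delta$ extracted from the FIFO hypothesis is exactly what makes this possible; everything else reduces to routine bookkeeping on the regular structure of the three automata.
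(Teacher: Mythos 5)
Your proposal is correct and follows essentially the same route as the paper's proof: extract from the FIFO hypothesis the common word $X=w_1\ldots w_n$ underlying both the $\send{a}{b}{\cdot}$ and $\send{a}{c}{\cdot}$ subsequences of $\gamma$, have $A_b$ and $A_c$ forward that word in order, have every process receive in order $X$, and match the $i$-th send to the $i$-th receive on each channel. Your write-up is merely more explicit than the paper's about verifying the MSC axioms, weak synchrony, and the accepting-run conditions.
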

\begin{proof}
    Suppose $\gamma$ is a FIFO sequence. It follows that $a$ must send messages to $b$ in the same order $X=m_1 \ldots m_k$ as $a$ sends messages to $c$. We will now build an accepting run $\mu$ of $\Sys_3$ such that $!\alpha^\mu_a=\gamma$. For $A_b$ ($A_c$), we can construct $!\alpha^\mu_b$ ($!\alpha^\mu_c$) by sending messages to $a$ and $c$ ($b$ and $a$) in order $X$. For $p \in \{a,b,c\}$, $?\alpha^\mu_p$ is constructed by receiving messages from the other two processes in order $X$. $\alpha^\mu_a$, $\alpha^\mu_b$, and $\alpha^\mu_c$ all lead the corresponding automaton to the final state, and all messages that are sent are also received, so $\mu$ is an accepting run of $\Sys_3$.
    \qed
\end{proof}

We now present Algorithm \ref{alg:complete_dummies}, which essentially takes an accepting run of $S_1$, and returns an accepting run for $S_3$. The correctness of the algorithm is proved in a few steps.
    
\begin{restatable}{lemma}{algRealizSeq}
\label{lem:alg_realiz_seq}
    Let $\gamma'$ be the sequence of actions returned by one execution of Algorithm \ref{alg:complete_dummies}. Then, $\gamma'$ is a sequence of actions that takes $A_a$ of $\Sys_3$ from state $\ell^0$ to $\ell^{acc}$.
\end{restatable}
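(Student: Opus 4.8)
The statement asserts that the flat list $\gamma'$ returned by Algorithm~\ref{alg:complete_dummies} is precisely the sequence of non-$\varepsilon$ labels along some run of $A_a$ that starts in $\ell^0$ and reaches $\ell^{acc}$ (the accepting state of the original $A$, which is merely an internal state of $A_a$). Since $A_a$ carries $\varepsilon$-moves — the exit $(\ell_{t_2},\varepsilon,\ell')$ of each receive gadget and the inherited $\varepsilon$-transitions of $A$ — the plan is to exhibit a concrete run and to insert those $\varepsilon$-steps at the gadget boundaries, checking that every non-$\varepsilon$ action the algorithm appends to $\gamma'$ labels a transition of $\delta_a$ that is actually available out of the current state.

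I would argue by induction on the iterations of the main \textbf{for} loop, maintaining the invariant: after the body for index $i$ has executed, the prefix of $\gamma'$ built so far is the non-$\varepsilon$ trace of a run of $A_a$ from $\ell^0$ to $\ell_i$, where $\ell_i \in Loc$ is the state that $A$ occupies after firing $\alpha^\sigma(1)\cdots\alpha^\sigma(i)$ in $\sigma$. The inductive step splits on $\alpha^\sigma(i)$. If it is $!x$, matched by a transition $(\ell_{i-1},!x,\ell_i)\in\delta$, the appended block $(\sact{a}{c}{D})^k\,\sact{a}{b}{x}$ is realized by taking the dummy self-loop at $\ell_{i-1}$ exactly $k$ times and then the send transition into $\ell_i$. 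If it is $?x$, matched by $t=(\ell_{i-1},?x,\ell_i)$, the appended block $\sact{a}{c}{x}\,\sact{a}{b}{D}\,(\sact{a}{c}{D}\,\sact{a}{b}{D})^{j}$ is realized by entering the receive gadget through $(\ell_{i-1},\sact{a}{c}{x},\ell_{t_1})$ and $(\ell_{t_1},\sact{a}{b}{D},\ell_{t_2})$, iterating the two-step loop $\ell_{t_2}\to\ell_{t_1}\to\ell_{t_2}$ exactly $j$ times, and finally taking the $\varepsilon$-exit to $\ell_i$. In both cases I would read off from $\delta_a$ that these transitions are available, so the non-$\varepsilon$ trace grows by exactly the appended block and the invariant is restored. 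Because $\sigma$ is accepting, $\ell_n=\ell^{acc}$; it then remains to account for the final \textbf{while} loop, which appends a run of $\sact{a}{c}{D}$ actions, and to argue that the resulting run finishes in $\ell^{acc}$.

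The real work is in pinning down the invariant strongly enough that the two ``flush'' loops (the leading one in the $!x$ case and the trailing one) always correspond to enabled transitions. This forces me to track not just the current automaton state but the exact relationship between the front of Queue and the position inside the current gadget: a $D$ at the head of Queue should be exactly a dummy for which $A_a$ still owes a matching $\sact{a}{c}{D}$, so that a \emph{first(Queue)=D} test coincides with the enabledness of the corresponding dummy transition. Getting this correspondence right is the main obstacle; the most delicate sub-point is the endpoint, namely verifying that after the last iteration and the final flush the run sits on $\ell^{acc}$ and does not overshoot into the receiving phase (states $\ell^D_a$, $\ell_a^?$). Once the invariant is phrased correctly, each remaining case is a direct check against $\delta_a$, while well-definedness of $\gamma'$ (termination of the loops) is handled separately in Lemma~\ref{lem:alg_termination}.
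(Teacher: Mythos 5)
Your proposal is correct and takes essentially the same route as the paper: an induction on the iterations of the main for loop with the invariant that after processing $\alpha^\sigma(i)$ the simulated run of $A_a$ sits in the same state $\ell^i$ that $A$ occupies in $\sigma$, the same two-case analysis realizing each appended block via the dummy self-loop (for $!x$) and the receive gadget with its $\varepsilon$-exit (for $?x$), and the same treatment of the trailing flush at $\ell^{acc}$. The one place you diverge is in anticipating that you must correlate the head of Queue with enabledness of the dummy transitions; this is not needed for this lemma, since those dummy loops are enabled for arbitrarily many iterations independently of Queue contents (the Queue only governs how many times the algorithm chooses to take them, and matters instead for the FIFO-sequence and termination lemmas), which is why the paper's proof never tracks the Queue here.
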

\begin{proof}
    Let $\sigma$ be the accepting run of $\Sys_1$ that is given as an input to the algorithm. 
    Suppose that both $A$ in $\Sys_1$ and $A_a$ in $\Sys_3$ start in their initial states and, every time we read an action from $\alpha^\sigma$ in the algorithm (line 4), we take that action in the current state of $A$\footnote{Or from a state of $A$ that is reachable from the current one using only $\varepsilon$-transitions. \label{fn:eps_trans}}; we know that this is always possible since $\alpha^\sigma$ is by definition a sequence of actions\footnote{Possibly interleaved by some $\varepsilon$ actions. \label{fn:eps_act}} that takes $A$ from its initial state $\ell^0$ to its final state $\ell^{acc}$. Similarly, each time that an action is added by the algorithm to the sequence $\gamma'$, we take that action in the current state of $A_a$ in $\Sys_3$\footref{fn:eps_trans}, provided that there exists a transition with such an action; we show that there always is such a transition, and that the sequence of actions $\gamma'$ built by the algorithm\footref{fn:eps_act} will take $A_a$ from its initial state  $\ell^0$ to the state $\ell^{acc}$ (note that $\ell^{acc}$ is not the final state of $A_a$, by definition). We show by induction that, right before each iteration of the for loop, $A$ and $A_a$ can always be in the same state (the correspondence between states of $A$ and the states of $A_a$ is given by the definition of $A_a$); in particular, we show that before the $i$-th iteration of the for loop, they can always both be in state $\ell^{i-1}$, which is the state from which $A$ will take the action $\alpha^\sigma(i)$.
    For the base case, we are right before the first execution of the for loop. $A$ is in a state $\ell$ that is either the initial state $\ell^0$ or some state reachable with only $\varepsilon$-transitions from $\ell^0$. In both cases, by construction, $A_a$ can also be in the same state $\ell$. For the inductive step, we assume that $A$ and $A_a$ are both in the same state $\ell^{i-1}$ before executing the $i$-th iteration of the for loop (where $\ell^{i-1}$ is the state in which $A$ is ready to take the $\alpha^\sigma(i)$ action), and we show that at the end of the $i$-th iteration both $A$ and $A_a$ end up in state $\ell^i$. There are two main possibilities for the $i$-th iteration of the for loop:
    \begin{itemize}\itemsep=0.5ex
        \item $\alpha^\sigma(i)=!x$, so the \emph{if} at line 5 is entered. This means that $\ell^{i-1}$ in $A$ is a state that has an outgoing transition with the send action $!x$. By construction, since $A_a$ is also in $\ell^{i-1}$, it can take an unlimited number of $\sact{a}{c}{D}$ actions, followed by a $\sact{a}{b}{x}$ action. These are exactly the kind of actions added to $\gamma'$ by the \emph{if} that starts at line 5.
        \item $\alpha^\sigma(i)=?x$, so the \emph{if} at line 12 is entered. $A$ is then in a state $\ell^{i-1}$ that has an outgoing transition with the receive action $?x$. By construction, $A_a$ in state $\ell^{i-1}$ can take the $\sact{a}{c}{x}$ action, followed by a $\sact{a}{b}{D}$ action; after that, $A_a$ can take the consecutive pair of actions $\sact{a}{c}{D}$ and $\sact{a}{b}{D}$ any number of times. These are exactly the kind of actions added to $\gamma'$ by the \emph{if} that starts at line 12.
    \end{itemize}   
    In both cases, after taking action $\alpha^\sigma(i)$, $A$ gets to state $\ell^i$ (possibly using some additional $\varepsilon$-actions), ready for the next execution of the for loop; after taking the actions added by the algorithm to $\gamma'$, $A_a$ can also get to state $\ell^i$, by construction. After the last iteration of the for loop, $A$ and $A_a$ will both be in state $\ell^n = \ell^{acc}$. By construction, $A_a$ can take an unlimited number of $\sact{a}{c}{D}$ actions in this state, which are the only kind of actions that can be added by the algorithm during the final while loop (line 27). 
    \qed
\end{proof}

\begin{lemma}
    \label{lem:alg_FIFO_seq}
        Let $\gamma'$ be the sequence of actions returned by one execution of Algorithm \ref{alg:complete_dummies}. $\gamma'$ is a valid FIFO sequence.
\end{lemma}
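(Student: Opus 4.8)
The plan is to show that the auxiliary variable Queue maintained by Algorithm~\ref{alg:complete_dummies} is, throughout the whole execution, a faithful model of the very FIFO queue whose behaviour $\gamma'$ is meant to describe. Recall that a sequence of $\sact{a}{b}{m}$ and $\sact{a}{c}{m}$ actions is FIFO precisely when, reading $\sact{a}{b}{m}$ as a push (send $!m$) and $\sact{a}{c}{m}$ as a pop (receive $?m$), (i) messages are popped in the order they were pushed and (ii) no pop occurs before its matching push. The first observation is purely syntactic: inspecting the four sites where actions are appended, every $\sact{a}{b}{m}$ added to $\gamma'$ is immediately paired with an enqueue of $m$, and every $\sact{a}{c}{m}$ added to $\gamma'$ is immediately paired with a dequeue whose removed element is exactly $m$ (the leading while-loop of the $!x$ branch, the first two appends of the $?x$ branch, the inner while-loop of the $?x$ branch, and the final while-loop). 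Hence the contents of Queue at any moment coincide with the contents of the queue obtained by running the push/pop interpretation of the prefix of $\gamma'$ produced so far.

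Next I would prove, by induction on the iteration index of the main for-loop, the invariant that at the beginning of each iteration (a) the word obtained from Queue by deleting all occurrences of $D$ equals the current content of $A$'s internal queue in the run $\sigma$, and (b) Queue carries a leading $D$ only when it contains no non-dummy symbol at all (i.e., leading dummies are stripped whenever a real message is still pending). The base case is immediate, both queues being empty. For the inductive step I distinguish the two branches. In the $!x$ branch the leading while-loop removes only dummies, leaving (a) unchanged, and then $x$ is enqueued, matching the push of $x$ performed by $A$. In the $?x$ branch, part (b) of the hypothesis together with the fact that $\sigma$ is a legal run of the FIFO automaton $A$ (so that the front of $A$'s queue is exactly the received message $x$) forces the front of Queue to be $x$; this is what makes the \emph{dequeue} of $x$ and the appended $\sact{a}{c}{x}$ both well defined and mutually consistent. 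The subsequent enqueue of $D$ and the inner while-loop only insert and rotate dummies, so (a) is preserved and (b) is re-established. The crux of the argument, and the step I expect to be most delicate, is exactly this: one must verify that moving dummies from the front to the back and stripping leading dummies never reorders the real messages, so that the projection that forgets $D$ is genuinely invariant, and that in the $?x$ branch the front really is the non-dummy $x$ rather than a stray $D$.

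Finally I would read off the two FIFO conditions from the invariant. Because every appended $\sact{a}{c}{m}$ pops the current front and records precisely that front, and because each of the four dequeue sites is guarded so that Queue is non-empty when popped (the while-guards test whether the front equals $D$, and in the $?x$ branch part (a) guarantees a pending $x$), the push/pop interpretation of $\gamma'$ is a legitimate trace of a FIFO queue: no pop is ever applied to an empty queue, and every popped symbol is the oldest pending one. Condition (ii) then follows because any popped element was necessarily enqueued earlier, and condition (i) follows from the queue discipline itself, the preservation of the non-dummy projection ensuring that the real messages come out in their sending order regardless of the dummy bookkeeping. This establishes that $\gamma'$ is a valid FIFO sequence.
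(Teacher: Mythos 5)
Your proof is correct, and its core is the same as the paper's: every $\sact{a}{b}{m}$ appended to $\gamma'$ is paired with an enqueue of $m$ and every $\sact{a}{c}{m}$ with a dequeue of $m$, so $\gamma'$ transcribes a queue trace and is therefore FIFO. Where you go beyond the paper is in justifying the one non-obvious point, namely that the dequeue of $x$ in the $?x$ branch really finds $x$ at the front of \emph{Queue} (so that the transcription is a \emph{legitimate} queue trace). The paper's proof of this lemma simply asserts that ``the behavior of a queue is naturally FIFO'' and defers the well-definedness of that dequeue to the separate Lemma~\ref{lem:alg_termination}, which establishes it by a case analysis on the shape of the preceding iteration (a send strips leading dummies; a receive rotates them to the back). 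You instead prove a loop invariant --- the $D$-erased content of \emph{Queue} equals the content of $A$'s queue in $\sigma$, and a leading $D$ occurs only when \emph{Queue} is all dummies --- from which both the non-blocking of the dequeue and the FIFO property fall out at once. Your invariant formulation is arguably cleaner and more self-contained (it also immediately explains why the final while-loop empties the queue), while the paper's split keeps this lemma trivial at the cost of pushing the real work into the termination lemma; either way the content is the same and your argument is sound.
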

\begin{proof}
    Each time that a $\sact{a}{b}{m}$ action gets added to $\gamma'$ by the algorithm, $m$ is enqueued, and each time a $\sact{a}{c}{m}$ action is added to $\gamma'$, $m$ is dequeued. Our claim directly follows (the behavior of a queue is naturally FIFO). 
\end{proof}

\begin{restatable}{lemma}{algTermination}
\label{lem:alg_termination}    
    Algorithm \ref{alg:complete_dummies} always terminates.
\end{restatable}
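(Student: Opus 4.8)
The plan is to note that Algorithm~\ref{alg:complete_dummies} is a single \textbf{for} loop ranging over $i$ from $1$ to $n$, where $n$ is the (finite) length of $\alpha^\sigma$. Hence termination reduces to showing that each iteration of the \textbf{for} loop, together with the final \textbf{while} loop at line~27, halts. The body of the \textbf{for} loop is straight-line code apart from the embedded \textbf{while} loops, so I would establish termination of each of these \textbf{while} loops in turn.

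First I would dispatch the two straightforward \textbf{while} loops: the one at line~6 (inside the $!x$ branch) and the final one at line~27. Both are guarded by $\mathrm{first}(\mathrm{Queue}) = D$ and perform a single dequeue per iteration with no matching enqueue, so each iteration strictly decreases the size of the queue. Since the queue is finite on entry, each loop runs at most $|\mathrm{Queue}|$ times before the guard fails---either because the queue becomes empty or because its front is no longer $D$.

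The delicate case, and the main obstacle, is the nested \textbf{while} loop at lines~18--23 inside the $?x$ branch, where each iteration dequeues a $D$ and subsequently enqueues a $D$, leaving the queue size \emph{invariant}; the naive ``size decreases'' argument therefore breaks down. The key observation I would exploit is that this loop is entered only when the test at line~17 succeeds, which guarantees that the queue contains at least one element different from $D$. I would then take as termination measure the number $k$ of consecutive $D$'s at the front of the queue, up to the first non-$D$ element. Each iteration removes one front $D$ and appends the fresh $D$ strictly behind that non-$D$ element, so the first non-$D$ element advances one position toward the front and $k$ decreases by exactly one. As $k$ is bounded by the finite queue length and a non-$D$ element is guaranteed to exist, after at most $k$ iterations the front becomes a non-$D$ symbol and the guard fails.

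Putting these together, every \textbf{while} loop terminates and the enclosing \textbf{for} loop performs exactly $n$ iterations, so the algorithm halts. I would also record the minor supporting fact that the queue stays finite throughout: each $?x$ iteration preserves the queue size and each $!x$ iteration alters it by a bounded amount, which is immediate since $\alpha^\sigma$ is the finite action sequence of an accepting, orphan-free run of $\Sys_1$. This ensures that all the per-iteration bounds invoked above are genuinely finite.
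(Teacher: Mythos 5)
Your treatment of the three \textbf{while} loops is sound and essentially matches the paper's: the loops at lines~6 and~27 strictly shrink the queue (the paper instead bounds the total number of $D$'s by $n$, but both measures work), and for the loop at lines~18--23 your measure --- the number of consecutive $D$'s ahead of the first non-$D$ element, which exists by the guard at line~17 and strictly decreases because the re-enqueued $D$ lands behind it --- is a clean local variant of the paper's counting argument.

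However, you have missed the other failure mode, which the paper treats as the heart of the proof: the \emph{dequeue of a non-dummy message $x$ at line~14} is not unconditional straight-line code. The instruction ``dequeue $x$ from Queue'' presupposes that $x$ is at the front; if the front is occupied by leftover $D$'s (e.g.\ the queue is $[D,x,\dots]$ when the action $?x$ is read), the operation blocks and the algorithm does not terminate. Ruling this out is nontrivial: one must argue that whatever the previous iteration did --- the $!$-branch via the flush at line~6, or the $?$-branch via the rotation at lines~17--23 --- it leaves no $D$ in front of the oldest non-dummy message at the start of the current iteration, and combine this with the fact that $\alpha^\sigma$ is itself a FIFO sequence so that the expected $x$ is indeed that oldest non-dummy message. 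The paper does this by contradiction with a case split on whether $\alpha^\sigma(i-1)$ is a send or a receive. Your proposal, by declaring the loop body ``straight-line code apart from the embedded while loops,'' silently assumes line~14 always succeeds, so as written the argument does not establish termination. Adding the invariant ``at the top of each \textbf{for} iteration, no $D$ precedes the first non-dummy message in Queue'' (proved by the case analysis above) would close the gap.
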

\begin{proof}
    The only ways in which the algorithm does not terminate are either
    \begin{enumerate*}[label={(\roman*)}]
        \item if it blocks when trying to dequeue a message $m$ that is not the first in \emph{Queue}, or
        \item if a while loop runs forever.
    \end{enumerate*} 
    We show that neither ever happens. Let us first focus on the specific case of line 14, when a message $x$ is dequeued.
    Each time the algorithm encounters a send action $!x$ in $\alpha^\sigma$, message $x$ is enqueued; each time it encounters a receive action $?x$, message $x$ is dequeued. There are no other occasions in which a normal message (i.e., not a dummy message $D$) is enqueued or dequeued. By definition, $\alpha^\sigma$ is a valid FIFO sequence for a single queue, so each time that the algorithm reads a receive action $?x$ and gets to line 14, message $x$ must be the first in the queue, unless there are some dummy messages $D$ before. We show that this is impossible. Suppose, by contradiction, that during the $i$-th iteration of the for loop, $\alpha^\sigma(i)=?b$ and the algorithm blocks at line 14, because there are some $D$ messages before $b$ in the queue; these $D$ messages must have been enqueued during previous iterations of the for loop. Note that $i>1$, since the first action in $\alpha^\sigma$ cannot be a receive action, and the algorithm gets to line 14 only when it reads a receive action.  Consider the previous $(i-1)$-th iteration of the algorithm, where $\alpha^\sigma(i-1)$ could either be a send or receive action: 
    \begin{itemize}\itemsep=0.5ex
        \item In the first case, we would have entered the while loop at line 6, which would have dequeued all the $D$ messages on the top of the queue, leaving $b$ as the first one when entering the $i$-th iteration of the for loop, therefore leading to a contradiction.
        \item In the second case, $\alpha^\sigma(i-1)$ is a receive action, so we would have entered the \emph{if} at line 17 (since $b$ is in the queue by hypothesis), and the while loop right after at line 18; this loop also dequeues all $D$ messages and puts them back in the queue, leaving $b$ as the first one when entering the $i$-th iteration of the for loop, leading again to a contradiction.
    \end{itemize}
    We showed that the \emph{dequeue} operation at line 14 never blocks. 
    
    Now, we consider the cases in which a $D$ message is dequeued (line 8, 20, and 29), and show that the algorithm never blocks. In all of these cases, we are in a while loop that is entered only if the message at the top of the queue is $D$, therefore the algorithm will never block.
    The last thing to show is that no while loop will run forever. To do this, we first show that, at any point of the algorithm, the number of $D$ messages in the queue is at most $n$. Note that a $D$ message can only be added to the queue at lines 16 and 22. In the case of line 22, a $D$ message is enqueued only after another $D$ message was dequeued (line 20), so the total number of $D$ messages in the queue does not change each time that an iteration of the while loop at line 18 is executed. Line 16 is therefore the only one that can effectively increase the number of $D$ messages in the queue, and can only be executed at most once per iteration of the for loop. The number of $D$ messages in the queue at any time can then be at most $n$ (in particular, it is finite). It follows directly that the while loops at line 6 and 27 will never run forever. We get to the while loop at line 18 only if there is at least one non-dummy message $x$ in the queue; since the number of $D$ in the queue is finite, the loop will run a finite number of time before encountering message $x$ at the top of the queue. 
    \qed
\end{proof}

\begin{lemma}
\label{lem:acc_single_to_three}
    If there is an accepting run $\sigma$ of $\Sys_1$, then there is an accepting run $\mu$ of $\Sys_3$.
\end{lemma}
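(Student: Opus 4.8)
The plan is to obtain $\mu$ by running Algorithm~\ref{alg:complete_dummies} on $\sigma$ and then invoking the realizability lemma on its output. Concretely, I would let $\alpha^\sigma$ be the sequence of actions taken by the single automaton $A$ in the accepting run $\sigma$, and feed it to Algorithm~\ref{alg:complete_dummies}. By Lemma~\ref{lem:alg_termination} the algorithm halts, so its output $\gamma'$ is a well-defined finite sequence of send actions rather than a diverging or blocking computation.

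I would then collect the properties of $\gamma'$ that the preceding lemmas supply. By Lemma~\ref{lem:alg_realiz_seq}, $\gamma'$ drives $A_a$ from its initial state $\ell^0$ to the state $\ell^{acc}$; by Lemma~\ref{lem:alg_FIFO_seq}, $\gamma'$ is moreover a FIFO sequence. Together these nearly match the hypotheses of Lemma~\ref{lem:fifo_realiz}, which produces an accepting run $\mu$ of $\Sys_3$ with $!\alpha^\mu_a=\gamma'$ out of any FIFO send-sequence that takes $A_a$ from $\ell^0$ to $\ell_a^?$ (allowing extra $\varepsilon$-actions in any state), since that lemma also builds the matching receive phases of $A_a$, $A_b$, and $A_c$ that complete the run.

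The one gap to close is that Lemma~\ref{lem:alg_realiz_seq} only certifies that $\gamma'$ reaches $\ell^{acc}$, whereas Lemma~\ref{lem:fifo_realiz} wants a sequence reaching $\ell_a^?$. I would bridge this by recalling that $\ell^{acc}$ is an interior (non-final) state of $A_a$ from which $\ell_a^?$ is reached using only the $\varepsilon$-transitions $(\ell^{acc},\varepsilon,\ell^D_a)$ and $(\ell^D_a,\varepsilon,\ell_a^?)$ of the construction, together with the $\sact{a}{c}{D}$ self-loop at $\ell^D_a$ that realizes the final dummy sends appended by the algorithm's last while loop. Because Lemma~\ref{lem:fifo_realiz} explicitly permits $A_a$ to take extra $\varepsilon$-actions in any state, $\gamma'$ qualifies as a legitimate witness for that lemma.

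Applying Lemma~\ref{lem:fifo_realiz} to $\gamma'$ then yields the required accepting run $\mu$ of $\Sys_3$, completing the proof. The work is essentially bookkeeping: the four preceding lemmas do the heavy lifting, and I expect the only real subtlety to be keeping track of which $\ell^{acc}$ is meant (the old accept state of $A$, now an interior state of $A_a$) so that the endpoint guaranteed for $\gamma'$ and the precondition of Lemma~\ref{lem:fifo_realiz} line up through the intervening $\varepsilon$-transitions.
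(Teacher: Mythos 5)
Your proposal is correct and follows essentially the same route as the paper's proof: terminate the algorithm (Lemma~\ref{lem:alg_termination}), certify that $\gamma'$ is a valid send-sequence for $A_a$ (Lemma~\ref{lem:alg_realiz_seq}) and a FIFO sequence (Lemma~\ref{lem:alg_FIFO_seq}), then invoke Lemma~\ref{lem:fifo_realiz}. Your extra care in bridging the $\ell^{acc}$ versus $\ell_a^?$ endpoint via the $\varepsilon$-transitions and the $\sact{a}{c}{D}$ self-loop is a welcome refinement of a step the paper's own proof glosses over.
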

\begin{proof}
    By Lemma \ref{lem:alg_termination}, Algorithm \ref{alg:complete_dummies} always terminates and returns $\gamma'$. By Lemma \ref{lem:alg_realiz_seq}, $\gamma'$ is a sequence of actions that takes $A_a$ from $\ell^0_a$ to $\ell^?_a$. Lemma \ref{lem:alg_FIFO_seq} shows that $\gamma'$ is a FIFO sequence, so we can finally use Lemma \ref{lem:fifo_realiz} to claim that there is an accepting run $\mu$ of $\Sys_3$ in which $!\alpha_a^\mu=\gamma'$. 
    \qed
\end{proof}

\begin{restatable}{lemma}{SendOrder} 
    \label{lem:send_order}
    Let $\mu$ be an accepting run of $\Sys_3$. In $!\alpha^\mu_a$, there is an equal number of messages sent to $b$ and to $c$. Moreover, in $!\alpha^\mu_a$, if  $x$ is the $i$-th message sent to $b$, and $y$ the $i$-th message sent to $c$, then $x=y$.
\end{restatable}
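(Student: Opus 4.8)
The plan is to prove the stronger, cleaner statement that the whole message word $A_a$ emits towards $b$ is \emph{identical} to the one it emits towards $c$; the equal count and the position-wise equality are then immediate. The argument is a single ``chase'' of one message word around the communication cycle, visiting all six directed channels in turn, and it rests on two kinds of identities: \emph{channel identities}, coming from the p2p/orphan-free discipline, and \emph{pairing identities}, coming from the internal wiring of $A_a$, $A_b$ and $A_c$.

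For distinct $x,y \in \{a,b,c\}$, let $S_{xy}$ be the sequence of messages carried by the send events of $x$ towards $y$ in $\mu$ (ordered by $\procrel_x$), and let $R_{xy}$ be the sequence of messages carried by the receive events of $y$ coming from $x$ (ordered by $\procrel_y$). The target is $S_{ab}=S_{ac}$. First I would establish the channel identity $S_{xy}=R_{xy}$ for each of the six channels. Since $\mu$ is accepting we have $\Unm{\msc}=\emptyset$, so every send is matched; together with conditions~(1),~(2) and~(4) of Definition~\ref{def:msc}, the relation $\lhd$ restricted to the channel $x\to y$ is an order-preserving bijection between the sends of $x$ to $y$ and the receives of $y$ from $x$ that pairs equal messages. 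Hence the two words coincide.

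Next I would read the pairing identities off the transition relations of Fig.~\ref{fig:encoding}. In an accepting run each machine must reach $\ell_p^{acc}$, and the $\varepsilon$-exits of the send and receive loops sit exactly at the base states ($\ell_b^0$ and $\ell_b^?$, and likewise for $c$ and for the receive loop of $a$ ending at $\ell_a^?$); therefore each loop is traversed through an integral number of \emph{complete} pairs, never stopping half-way. The send loop of $A_b$ emits the same message first to $a$ and then to $c$, giving $S_{ba}=S_{bc}$; symmetrically the send loop of $A_c$ gives $S_{cb}=S_{ca}$. The receive loop of $A_b$ consumes the same message first from $a$ and then from $c$, giving $R_{ab}=R_{cb}$; likewise $A_c$ gives $R_{bc}=R_{ac}$ and $A_a$ gives $R_{ca}=R_{ba}$. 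Chaining channel and pairing identities alternately,
\begin{align*}
S_{ab} &= R_{ab} = R_{cb} = S_{cb} = S_{ca} = R_{ca}\\
       &= R_{ba} = S_{ba} = S_{bc} = R_{bc} = R_{ac} = S_{ac},
\end{align*}
we obtain $S_{ab}=S_{ac}$. Since $!\alpha^\mu_a$ is exactly an interleaving of $S_{ab}$ and $S_{ac}$, this yields both $|S_{ab}|=|S_{ac}|$ and that the $i$-th message sent to $b$ equals the $i$-th message sent to $c$, as claimed.

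The main obstacle, and the only place where care is genuinely needed, is justifying the pairing identities: one must argue that an accepting run cannot abandon a send/receive loop in the middle of a pair, which is precisely where the placement of the $\varepsilon$-exits at the base states and the requirement of ending in $\ell_p^{acc}$ come into play. Everything else is the routine p2p bijection between matched sends and receives, applied once per channel.
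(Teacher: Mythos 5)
Your proof is correct and follows essentially the same route as the paper's: both arguments chase the message word around the ring of six channels, using the fact that each forwarder loop pairs identical messages on its two channels and that each p2p channel preserves the word from sender to receiver. Your version merely packages the paper's two separate contradiction arguments (equal counts, then equal order) into one direct chain of word equalities $S_{ab}=R_{ab}=\dots=S_{ac}$, which is a cleaner presentation of the same idea.
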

\begin{proof}
    By construction, in an accepting run of $\Sys_3$, $A_b$ must send messages in the same order and in the same number to the other two processes, in order not to block and to reach the state $\ell^?_b$, in which it is ready to start receiving messages. The same goes for $A_c$ (but not for $A_a$). Also, once $A_b$ gets to state $\ell^?_b$, it must receive messages from the other two processes in the same number (let it be $n$) and in the same order, so not to block and to reach the final state; this means that $a$ and $c$ must send exactly $n$ messages to $b$\footnote{By the definition of accepting run, $A_b$ has to receive all messages that were sent by the other two processes before moving to the final state, since we cannot have some messages sent to $b$ that are not received.}. The same kind of reasoning holds for $A_a$ and $A_c$, i.e., each process receives messages in the same order and in the same number from the other two processes in an accepting run of $\Sys_3$. We now show that the number of messages sent by $a$ to $b$ (let it be $n_1$) and by $a$ to $c$ (let it be $n_2$) is the same. Suppose, by contradiction, that $n_1 \neq n_2$ in an accepting run of $\Sys_3$. Based on the above, $n_1$ is also the number of messages sent by $c$ to $b$, and by $c$ to $a$; similarly, $n_2$ is the number of messages sent by $b$ to $c$, and by $b$ to $a$. We then have that $a$ receives $n_1$ messages from $c$, and $n_2$ messages from $b$. We said that we must have $n_1 = n_2$ in an accepting run of $\Sys_3$, hence the contradiction. The second part of the lemma essentially says that, for every accepting run of $\Sys_3$, the order in which $a$ sends messages to $b$ is the same as the order in which $a$ sends messages to $c$. By contradiction, suppose $a$ sends messages to $b$ following the order $X=m_1 \ldots m_k$, and messages to $c$ following another order $Y$, such that $X \neq Y$.    
    Based on the above, $X$ is also the order in which messages are sent by $c$ to $b$, and by $c$ to $a$; similarly, $Y$ is the order in which messages are sent by $b$ to $c$, and by $b$ to $a$. We then have that $a$ receives messages in order $X$ from $c$, and in order $Y$ from $b$. We said that we must have $X = Y$ in an accepting run of $\Sys_3$, hence the contradiction.
    \qed
\end{proof}
    
In order to make the following proofs more readable, we introduce some simplified terminology. Let $\mu$ be an accepting run of $\Sys_3$. In $!\alpha^\mu_a$, we will often refer to send actions addressed to $b$ as ``sends'', and to send actions addressed to $c$ as ``receipts'' (it follows from the way $\Sys_3$ was built from $\Sys_1$). Additionally, in $!\alpha^\mu_a$, we will refer to the $i$-th send action to $c$ as the matching receipt for the $i$-th send action to $b$ (which, in turn, will be referred to as the matching send for the $i$-th send action to $c$). For example, let $!\alpha^\mu_a= \sact{a}{c}{x}\;\sact{a}{b}{x}\;\sact{a}{b}{y}\;\sact{a}{b}{y}\;\sact{a}{b}{z}\;\sact{a}{c}{y}\;\sact{a}{c}{y}\;\sact{a}{c}{z}$ (note that it respects Lemma \ref{lem:send_order}): we will refer to the first $\sact{a}{c}{x}$ action as the receipt of the first $\sact{a}{b}{x}$ action, and similarly to $\sact{a}{c}{z}$ as the receipt of the only $\sact{a}{b}{z}$ action in $!\alpha^\mu_a$ (note that $\sact{a}{c}{z}$ is the 4th send action to $c$, and $\sact{a}{b}{z}$ is the 4th send action to $b$).

\begin{restatable}{lemma}{noRecBefSend}
\label{lem:no_rec_bef_send}
    Let $\mu$ be an accepting run of $\Sys_3$. For every message $x$, we cannot have more $\sact{a}{c}{x}$ actions than $\sact{a}{b}{x}$ actions in any prefix of $!\alpha^\mu_a$. 
\end{restatable}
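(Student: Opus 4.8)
The plan is to reduce the per-message prefix inequality to a single aggregate inequality, and then to establish the latter from the finite-state shape of $A_a$ together with Lemma~\ref{lem:send_order}. Write $\beta=\beta_1\beta_2\cdots$ (resp.\ $\gamma=\gamma_1\gamma_2\cdots$) for the sequence of messages that $a$ sends to $b$ (resp.\ to $c$) along $!\alpha^\mu_a$, dummies included. By Lemma~\ref{lem:send_order}, $\beta$ and $\gamma$ have the same length and $\beta_i=\gamma_i$ for every $i$, so $\beta=\gamma$ as message sequences. First I would show that it suffices to prove the aggregate statement: \emph{every prefix $\pi$ of $!\alpha^\mu_a$ contains at least as many $\sact{a}{b}{\cdot}$ actions as $\sact{a}{c}{\cdot}$ actions.} Indeed, if a prefix $\pi$ has $k$ sends to $b$ and $l$ sends to $c$ with $l\le k$, then the sends to $c$ recorded so far spell $\gamma_1\cdots\gamma_l=\beta_1\cdots\beta_l$, which is a prefix of the word $\beta_1\cdots\beta_k$ spelled by the sends to $b$; hence, as multisets, $\{\gamma_1,\dots,\gamma_l\}\subseteq\{\beta_1,\dots,\beta_k\}$, and for every message $x$ (including $x=D$) the number of $\sact{a}{c}{x}$ in $\pi$ is at most the number of $\sact{a}{b}{x}$ in $\pi$, which is exactly the claim.

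For the aggregate inequality I would track the balance $\Delta(\pi)=(\text{number of sends to }b\text{ in }\pi)-(\text{number of sends to }c\text{ in }\pi)$ and prove $\Delta\ge 0$ on every prefix. Inspecting $A_a$, the send actions of any accepting run split into a \emph{simulation part} (states of $A$ together with the gadget states $\ell_{t_1},\ell_{t_2}$) followed by a \emph{draining part} at $\ell^D_a$, which emits only $\sact{a}{c}{D}$; after $\ell^D_a$ only receive actions occur. The single structural fact I would extract is that, in the simulation part, every $\sact{a}{c}{\cdot}$ action (real or dummy) lands in state $\ell_{t_1}$, whose unique outgoing send is $\sact{a}{b}{D}$, and that $\ell_{t_2}$ is entered only via that $\sact{a}{b}{D}$. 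Consequently a \emph{real} receipt $\sact{a}{c}{m}$ with $m\in\Msg$ is immediately followed by the push $\sact{a}{b}{D}$, and an in-gadget dummy receipt $\sact{a}{c}{D}$ is immediately preceded by a push $\sact{a}{b}{D}$. Note also that, by Lemma~\ref{lem:send_order}, $\Delta=0$ at the end of all sends.

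The core is then a shortest-counterexample argument. Suppose $\Delta<0$ somewhere and let $\pi$ be the shortest prefix with $\Delta(\pi)=-1$; its last action is a receipt $\rho$, and $\Delta=0$ just before $\rho$. If $\rho$ lies in the draining part, then only further $\sact{a}{c}{D}$ actions follow before the receiving phase, so $\Delta$ stays $\le-1$ through the end of all sends, contradicting $\Delta=0$ there. If $\rho$ is an in-gadget dummy receipt, then $\ell_{t_2}$ was entered by a push $\sact{a}{b}{D}$ from $\ell_{t_1}$, so the strictly shorter prefix ending at that earlier $\ell_{t_1}$ already had $\Delta=-1$, contradicting minimality. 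Finally, if $\rho$ is a real receipt $\sact{a}{c}{m}$, let $l$ be its rank among all sends to $c$; since $\Delta=0$ before $\rho$, exactly $l-1$ sends to $b$ have occurred, so the forced push immediately after $\rho$ is the $l$-th send to $b$ and carries $D$, giving $\beta_l=D$, whereas $\gamma_l=m\in\Msg$, contradicting $\beta=\gamma$. Hence $\Delta\ge 0$ everywhere, and the lemma follows.

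I expect the main obstacle to be precisely the real-receipt case. The automaton $A_a$ on its own does allow the balance to dip below zero (for instance when the simulated $A$ begins with a receive), so the argument genuinely needs the global content-matching of Lemma~\ref{lem:send_order}: the delicate point is that the \emph{forced} dummy push after a real receipt would have to be the $l$-th send to $b$ and thus forces $\beta_l=D$, colliding with $\beta_l=\gamma_l=m$. The remaining care is organisational, namely ensuring the draining part and the in-gadget dummy receipts are discharged by counting and by minimality rather than by $\beta=\gamma$, so that $\beta=\gamma$ is invoked only where it is really decisive.
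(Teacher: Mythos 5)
Your route is genuinely different from the paper's, and most of it is sound: reducing the per-message claim to the aggregate balance $\Delta\ge 0$ via $\beta=\gamma$ (Lemma~\ref{lem:send_order}) is a clean simplification, whereas the paper instead chases the first receipt occurring before its matching send and does an ad hoc count of dummies sent versus dummies received inside an interval of the run. There is, however, one concrete gap: your inventory of the $\sact{a}{c}{\cdot}$ transitions of $A_a$ is incomplete. Besides the real receipts $(\ell,\send{a}{c}{m},\ell_{t_1})$, the gadget loop $(\ell_{t_2},\send{a}{c}{D},\ell_{t_1})$, and the draining loop at $\ell^{D}_a$, the construction also has a ``flush'' self-loop $(\ell,\send{a}{c}{D},\ell)$ at every state $\ell$ of $A$ with an outgoing send action: this is what lets $A_a$ ``send any number of dummy messages to $c$ right before each message sent to $b$'', it is what lines 6--9 of Algorithm~\ref{alg:complete_dummies} and the proof of Lemma~\ref{lem:alg_realiz_seq} rely on, and without it the left-to-right direction of Theorem~\ref{thm:und_reach} would fail (already for the run $!x\,?x\,!y\,?y$). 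The displayed $\delta_a$ mistypes this loop as $(\ell,\send{a}{b}{D},\ell)$, which is presumably what misled you; but with the intended loop your structural fact ``every $\sact{a}{c}{\cdot}$ action in the simulation part lands in $\ell_{t_1}$'' is false, and your minimal-counterexample analysis misses the case where the last action $\rho$ of the offending prefix is such a flush dummy.

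The missing case does yield to the two tools you already use, so the gap is repairable rather than fatal. If $\rho$ is the $l$-th send to $c$ and is a flush $\send{a}{c}{D}$ at a state $\ell$ of $A$, then $\gamma_l=D$, while the only way to leave $\ell$ other than by further flushes is the unique transition $(\ell,\send{a}{b}{m},\ell')$ with $m\in\Msg$ (or, possibly, an $\varepsilon$-move towards the draining part). Hence either the next send to $b$ is forced to carry a real message, giving $\beta_l=m\neq D=\gamma_l$ and contradicting $\beta=\gamma$ exactly as in your real-receipt case, or no further send to $b$ ever occurs and your terminal count argument applies. With this fourth case added, your proof is complete and, in my view, easier to check than the paper's.
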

\begin{proof}
    Using the above-mentioned simplified terminology, we could rephrase the lemma as: given an accepting run $\mu$ of $\Sys_3$, in $!\alpha^\mu_a$ there cannot be a receipt that appears before its send. By contradiction, suppose there is an accepting run $\mu$ of $\Sys_3$ in which a receipt appears before its matching send in $!\alpha^\mu_a$. Let us uniquely identify as $lastR_x$ the first such receipt in $!\alpha^\mu_a$, and as $lastS_x$ its matching send (we have $lastR_x \dashrightarrow^+ lastS_x$ in $!\alpha^\mu_a$). According to our reduction rules, we must send a dummy message right after $lastR_x$ in $!\alpha^\mu_a$. We will uniquely identify this action as $lastS_D$, and its receipt as $lastR_D$. There are two possibilities: either 
    \begin{enumerate*}[label={(\roman*)}]
        \item $lastS_D \dashrightarrow^+ lastR_D$ or
        \item $lastR_D \dashrightarrow^+ lastS_D$.
    \end{enumerate*}
   The first case leads to a contradiction, because we would have $lastS_D \dashrightarrow^+ lastS_x$ and $lastR_x \dashrightarrow^+ lastR_D$, which violates Lemma \ref{lem:send_order} (message $D$ is sent to $b$ before message $x$, but $D$ is sent to $c$ after $x$). We then consider the second scenario, in which $lastR_D \dashrightarrow^+ lastS_D$. According to the implementation of $\Sys_3$ (see reduction rules), a receipt of a dummy message, such as $lastR_D$, can only happen either 
   \begin{enumerate*}[label={(\roman*)}]
    \item before a send, or
    \item somewhere after a receipt (in any case, before the next non dummy-related action)\footnote{When not specified, actions do not refer to dummy messages. For example, ``can only happen before a send'' refers to the sending of a non-dummy message.}.
\end{enumerate*}
   The first case leads to a contradiction. Let $s$ be the above-mentioned send action and $r$ its receipt; we must have $s \dashrightarrow^+ r$, since $lastR_x$ was chosen as the first receipt that appears in $!\alpha^\mu_a$ before its send, but this violates again Lemma \ref{lem:send_order} (we would have $s \dashrightarrow^+ lastS_D$ and $lastR_D \dashrightarrow^+ r$). We then consider the second scenario, in which $lastR_D$ happens somewhere after a receipt of a message $y$, which we uniquely identify as $R_y$. Let $S_y$ be the matching send of $R_y$. For the same reason as before, $S_y \dashrightarrow^+ R_y$. According to our reduction rules, between $R_y$ and $lastR_D$ there could be an arbitrary large sequence of alternating $\sact{a}{b}{D}$ and $\sact{a}{c}{D}$ actions, where the last $\sact{a}{c}{D}$ is exactly $lastR_D$. In any case, if there are $k+1$ dummy messages sent between $R_y$ and $lastR_D$, there must be $k$ dummy messages received (excluding the end points). Let $S_D$ be any send of these dummy messages. Note that the matching receipt of $S_D$ (uniquely identified as $R_D$) cannot be neither
   \begin{enumerate*}[label={(\roman*)}]
    \item after $lastR_D$, nor
    \item before $R_y$,
\end{enumerate*}
   since both would again violate Lemma \ref{lem:send_order}: in the first case, we would have $S_D \dashrightarrow^+ lastS_D$ and $lastR_D \dashrightarrow^+ R_D$, whereas in the second case $S_y \dashrightarrow^+ S_D$ (since $S_y \dashrightarrow^+ R_y \dashrightarrow^+ S_D$) and $R_D \dashrightarrow^+ R_y$. This means that any of the $k$ sends of dummy messages between $R_y$ and $lastR_D$ must have its matching receipt also between $R_y$ and $lastR_D$ (end points excluded); this is impossible, since between $R_y$ and $lastR_D$ there are only $k$ dummy messages received and $k+1$ dummy messages sent, so at least one send will not have its matching receipt.
    \qed
\end{proof}

\begin{restatable}{lemma}{accMIffC}
\label{lem:acc_m_iff_c}
    If there is an accepting run $\mu$ of $\Sys_3$, then there is an accepting run $\sigma$ of $\Sys_1$.
\end{restatable}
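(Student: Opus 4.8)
The plan is to read a run of the FIFO automaton $A$ directly off the send actions of $A_a$. Concretely, I would take the sequence $!\alpha^\mu_a$ of send actions performed by $A_a$ in the accepting run $\mu$, and recover a run of $\Sys_1$ by deleting every action that mentions the dummy message $D$ and then rewriting each surviving $\sact{a}{b}{m}$ as $!m$ and each surviving $\sact{a}{c}{m}$ as $?m$ (using the ``send''/``receipt'' dictionary fixed after Lemma~\ref{lem:send_order}).

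First I would record that $!\alpha^\mu_a$ is already a FIFO sequence in the relaxed sense. Lemma~\ref{lem:send_order} states that the sequence of messages $a$ sends to $b$ and the sequence of messages $a$ sends to $c$ coincide, which is exactly condition (i) of the definition of a FIFO sequence (messages received in the order sent). Lemma~\ref{lem:no_rec_bef_send} states that, for every message $x$, no prefix of $!\alpha^\mu_a$ contains more receipts $\sact{a}{c}{x}$ than sends $\sact{a}{b}{x}$; together with the common ordering this yields condition (ii) (no message received before being sent). Hence $!\alpha^\mu_a$ is FIFO.

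Next I would define $\gamma$ as the word over $\{!m,?m \mid m\in\Msg\}$ obtained from $!\alpha^\mu_a$ by dropping every $\sact{a}{b}{D}$ and $\sact{a}{c}{D}$ and reinterpreting the remaining actions as above, and establish two things. (1) $\gamma$ is itself a FIFO sequence: since, by Lemma~\ref{lem:send_order}, the $b$-stream and the $c$-stream of $A_a$ are identical even on their $D$-entries, the dummy sends to $b$ and dummy receipts from $c$ form matched push/pop pairs of a dedicated message type; projecting a FIFO sequence over $\Msg\cup\{D\}$ to the sub-alphabet $\Msg$ therefore preserves both the relative order of the real messages and the absence of underflow, so $\gamma$ is FIFO, and equality of the $b$- and $c$-streams also makes the number of real sends equal to the number of real receipts, so $\gamma$ is orphan-free. (2) $\gamma$ is a legal traversal of $A$ from $\ell^0$ to $\ell^{acc}$: here I would project the portion of $\mu$ in which $A_a$ runs through its ``first part'' onto a path of $A$. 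By the definition of $\delta_a$, a real send $\sact{a}{b}{m}$ is available only along a transition coming from some $(\ell,!m,\ell')\in\delta$, while a real receipt $\sact{a}{c}{m}$ is available only as the opening step of the receive-gadget coming from some $(\ell,?m,\ell')\in\delta$, whose remaining steps consist solely of $D$-sends in the auxiliary states $\ell_{t_1},\ell_{t_2}$ and an $\varepsilon$-move to $\ell'$. Deleting the $D$-actions and the auxiliary $\varepsilon$-moves thus turns the traversal of $A_a$ into a genuine sequence of $\delta$-transitions of $A$, and since the receiving phase of $A_a$ is entered only through $(\ell^{acc},\varepsilon,\ell^D_a)$, this path ends in $\ell^{acc}$.

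Combining the two points, $\gamma$ is a FIFO sequence accepted by $A$, hence it is the action sequence of an accepting run $\sigma$ of $\Sys_1$: the single timeline executes $\gamma$ on its one queue, reaches $\ell^{acc}$, and leaves no unmatched send. I expect the main obstacle to be item (2), namely the careful verification that the dummy gadgets and the intermediate states $\ell_{t_1},\ell_{t_2}$ only ever contribute $D$-actions and $\varepsilon$-moves, so that their removal reproduces exactly the transitions of $A$ without introducing any spurious one; the FIFO-preservation argument of item (1) is conceptually simple but must be stated carefully, as it is precisely the point where Lemmas~\ref{lem:send_order} and~\ref{lem:no_rec_bef_send} are tied together.
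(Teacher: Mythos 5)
Your proposal is correct and follows essentially the same route as the paper: the paper's Algorithm~\ref{alg:run_C_to_M} performs exactly the deletion of dummy actions and the rewriting of $\sact{a}{b}{x}$, $\sact{a}{c}{x}$ into $!x$, $?x$ that you describe, and the paper likewise invokes Lemmas~\ref{lem:send_order} and~\ref{lem:no_rec_bef_send} to conclude that the resulting sequence is FIFO and then argues by construction of $A_a$ that it is a valid accepting traversal of $A$. Your write-up is in fact somewhat more explicit than the paper's (which only sketches the intuition), particularly on the FIFO-preservation of the projection and on the analysis of the receive-gadget states $\ell_{t_1},\ell_{t_2}$.
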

\begin{proof}
    Given an accepting run $\mu$ of $\Sys_3$, Algorithm \ref{alg:run_C_to_M} always returns a sequence of actions $\alpha^\sigma$ for an accepting run $\sigma$ of $\Sys_1$.  The proof is very similar to that of Lemma \ref{lem:acc_single_to_three}, but much easier; therefore, we only describe the main intuition without dealing with most of the formalism. First, the algorithm removes all actions related to dummy messages from $!\alpha^\mu_p$, and creates the sequence $seq$; then, it returns the sequence $\gamma$, which is identical to $seq$, except that $\sact{a}{b}{x}$ and $\sact{a}{c}{x}$ actions are rewritten as $!x$ and $?x$, respectively. Let $l^0 \ldots l^?$ be the sequence of states traversed by $A_a$ while taking the actions in $!\alpha^\mu_a$, ignoring states in which the only outgoing transitions have a $\sact{a}{b}{D}$ or $\sact{a}{c}{D}$ action; more specifically, these are the intermediate states introduced by the first reduction rule, which do not have a one-to-one correspondence with states of $A$.  Note that, by Lemmas \ref{lem:send_order} and \ref{lem:no_rec_bef_send}, $seq$ and, therefore, $\gamma$, are FIFO sequences. By construction, it is now not difficult to see that the sequence of actions $\gamma$ takes $A$ from $l^0$ to $l^?$ where $l^?$ is its final state. After all, just by looking at how $A_a$ is constructed, it is clear that a sequence of send actions in $A_a$, when removing dummy messages actions and interpreting $\sact{a}{b}{x}$ and $\sact{a}{c}{x}$ as $!x$ and $?x$, also represents a valid sequence for $A$, as long as it is a valid FIFO sequence (otherwise some receive actions in $A$ might block trying to read a message that is not at the top of the queue).
    \qed
\end{proof}

\begin{algorithm}\caption{Let $\mu$ be an accepting run of $\Sys_3$, and $!\alpha^\mu_a$ be the sequence of $size$ send actions taken by $A_a$ in $\mu$. $!\alpha^\mu_a(i)$ denotes the $i$-th action of $!\alpha^\mu_a$.}
    \label{alg:run_C_to_M}
    \begin{multicols}{2}
    \begin{algorithmic}[1]
    \STATE $seq \gets !\alpha^\mu_a$  
    \FOR{$i$ from $1$ to $size$}
        \STATE $action$ $\gets$ $!\alpha^\mu_a(i)$
        \IF{$action = \sact{a}{b}{D}$ or $action = \sact{a}{c}{D}$}
            \STATE remove $action$ from $seq$
        \ENDIF
    \ENDFOR
    \STATE $n \gets length(seq)$
    \STATE $\gamma$ $\gets$ empty list
    \FOR{$i$ from $1$ to $n$}
        \STATE $action$ $\gets$ $seq(i)$
        \IF{$action = \sact{a}{b}{x}$}
            \STATE add $!x$ to $\gamma$
        \ELSIF{$action = \sact{a}{c}{x}$}
            \STATE add $?x$ to $\gamma$
        \ENDIF
    \ENDFOR
    \RETURN $\gamma$;
    \end{algorithmic}
    \end{multicols}
\end{algorithm}

\section{Weakly synchronous causally ordered MSCs}
\label{apx:ws_p2p_co}
We recall here the definition of causally ordered (CO) MSC, borrowed from \cite{DBLP:journals/pacmpl/GiustoFLL23}.

\begin{definition}[CO MSC]\label{def:co_msc}
	An MSC $\msc = (\Events,\procrel,\lhd,\lambda)$ is \emph{causally ordered} if, for any two send events $s$ and $s'$, such that $\lambda(s)\in \pqsAct{\plh}{q}$, $\lambda(s')\in \pqsAct{\plh}{q}$, and $s \happensbefore s'$:
	\begin{itemize}
		\item either $s,s' \in \Matched{\msc}$ and  $r \procrel^* r'$, with $r$ and $r'$   receive events such that $s \lhd r$ and $s' \lhd r'$.
		\item or $s' \in \Unm{\msc}$.
	\end{itemize}
\end{definition}

An MSC is weakly synchronous CO if it is a weakly synchronous MSC and a CO MSC.

\begin{theorem}
    An $MSC$ is weakly synchronous CO if and only if it is weakly synchronous $\pp$.
\end{theorem}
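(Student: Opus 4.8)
The plan is to prove the two implications separately, since one of them holds for all \MSCs\ while weak synchrony is only needed for the other. The implication ``CO $\Rightarrow$ \pp'' requires no weak synchrony. To verify condition~(4), take $s_1 \procrel_p^+ s_2$ with $\lambda(s_1)=\sact{p}{q}{\msg_1}$, $\lambda(s_2)=\sact{p}{q}{\msg_2}$ and $(s_2,r_2)\in{\lhd}$. Since $\procrel_p \subseteq {\le_\msc}$ we have $s_1 \happensbefore s_2$, and both are sends to $q$, so the CO condition applies. As $s_2$ is matched, the alternative $s_2 \in \Unm{\msc}$ is excluded, so $s_1 \in \Matched{\msc}$, say $s_1 \lhd r_1$, with $r_1 \procrel^* r_2$. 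By condition~(2) distinct sends have distinct receipts, so $r_1 \procrel^+ r_2$; as $r_1,r_2$ are both events of $q$, this is exactly $r_1 \procrel_q^+ r_2$, which is condition~(4).

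For the converse, fix a decomposition $\msc = \msc_1 \cdots \msc_n$ witnessing weak synchrony and write $\phi(e)$ for the index of the phase containing $e$. Two structural facts drive the argument: (i) every message arrow stays inside a single phase, because concatenation only takes the union of the message relations, so $s \lhd r$ implies $\phi(s)=\phi(r)$; and (ii) along $\procrel$ the phase index is non-decreasing, and on any fixed process all phase-$i$ events precede all phase-$(i+1)$ events in $\procrel_p^*$. Together these give that $\phi$ is non-decreasing along ${\le_\msc}$. Now take sends $s \happensbefore s'$ to $q$. If $s' \in \Unm{\msc}$ the second CO alternative holds immediately, so assume $s' \lhd r'$; we must exhibit $r$ with $s \lhd r$ and $r \procrel^* r'$.

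The core is a case analysis on the senders of $s$ and $s'$. If they share a sender $p$, then on $p$ causal order coincides with process order, hence $s \procrel_p^+ s'$, and condition~(4) of \pp\ directly supplies a matching $r$ with $r \procrel_q^+ r'$. If the senders differ, I claim $\phi(s) < \phi(s')$ strictly. Suppose instead $\phi(s)=\phi(s')=i$; since $\phi$ is non-decreasing along ${\le_\msc}$, every event of a causal chain realizing $s \happensbefore s'$ lies in phase $i$. As $\SendEv{\msc_i}$ is ${\le_{\msc_i}}$-downward-closed and $s'$ is a send, all events of that chain are sends, so the chain uses no $\lhd$ step (a message arrow always ends at a receive) and is a pure $\procrel$-path, forcing $s$ and $s'$ onto the same process --- a contradiction. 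Thus $\phi(s) < \phi(s')$; assuming $s$ is matched, its receipt $r$ lies in phase $\phi(s)$ and $r'$ in phase $\phi(s') > \phi(s)$, both on $q$, so fact~(ii) gives $r \procrel_q^+ r'$, as required.

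The main obstacle is the one remaining point in the different-senders case: guaranteeing that $s$ itself is matched. This does \emph{not} follow from \pp\ and weak synchrony for arbitrary MSCs --- an unmatched send in an early phase may causally precede a matched send to the same destination in a later phase, giving a weakly synchronous \pp\ MSC that is not CO. This is precisely where orphan-freeness must be invoked: every MSC accepted by a system is orphan-free by condition~(vi) of an accepting run, so $\Unm{\msc}=\emptyset$ and $s \in \Matched{\msc}$ is immediate, closing the argument. I would therefore record facts~(i)--(ii) as a short preliminary lemma on weakly synchronous concatenations, carry out the proof under the standing orphan-freeness assumption relevant to the paper, and then assemble the case analysis exactly as above.
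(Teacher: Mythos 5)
Your proof is correct and, for the heart of the forward direction, follows the same strategy as the paper's: both arguments hinge on the observation that a causal chain between sends on distinct processes must contain a receive followed in process order by a send, which cannot happen inside a single phase because $\SendEv{\msc_i}$ is ${\le_{\msc_i}}$-downward-closed. The paper phrases this as a contradiction (a pair $r''\procrel^+ s''$ forced into one phase), you phrase it as the strict inequality $\phi(s)<\phi(s')$ together with monotonicity of $\phi$ along ${\le_\msc}$; these are the same idea. Your treatment of the CO~$\Rightarrow$~\pp\ direction is an actual verification of condition~(4), where the paper merely asserts the inclusion, and your same-sender case (condition~(4) applied directly) is a clean substitute for part of the paper's contradiction argument.

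The substantive difference is the case where $s$ is unmatched and $s'$ is matched, and here you have caught a genuine gap. The paper's proof of its second bullet stops after observing that $s$ and $s'$ lie on distinct processes and derives no contradiction --- and indeed none is available, because the statement as written fails for MSCs with orphans. Concretely: in phase $1$, process $p_1$ sends an unmatched $m_1$ to $q$ and then a matched $m_2$ to $p_2$; in phase $2$, $p_2$ sends a matched $m_3$ to $q$. Each phase has downward-closed sends, the concatenation is defined (the orphan of phase $1$ and the send of phase $2$ have different senders), condition~(4) holds vacuously since no channel carries two messages, yet the unmatched send to $q$ causally precedes the matched one, so the MSC is weakly synchronous \pp\ but not CO. The equivalence therefore needs the orphan-freeness hypothesis you impose, which is exactly the setting that matters for the corollary, since accepting runs require $\Unm{\msc}=\emptyset$. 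Your version of the argument is the one that should be recorded.
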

\begin{proof}
    ($\Leftarrow$) Let $M$ be a weakly synchronous CO MSC. $M$ is weakly synchronous and is also $\pp$, since each CO MSC is a $\pp$ MSC.

    \noindent($\Rightarrow$) Let $M$ be a weakly synchronous $\pp$ MSC. By contradiction, suppose it is not causally ordered, which means that there exist two send events $s$ and $s'$ addressed to the same process, such that $s {\le}_M s'$, and one of the following holds:
    \begin{itemize}
        \item $r' \procrel^+ r$, where $s \lhd r$ and $s' \lhd r'$. Note that $s$ and $s'$ cannot be executed by the same process, otherwise $M$ would not even be $\pp$. Since $s \le_M s'$, there is a 'chain' of events that causally links $s$ to $s'$. Note that, in this chain, there must exist a receive event $r''$ and a send event $s''$ such that $r'' \procrel^+ s''$ (otherwise $s$ and $s'$ could not be causally related). We now have a send event $s''$ that is executed after a receive event $r''$ by the same process. Note that $r''$ and $s''$ cannot be in two distinct phases of the weakly synchronous MSC $M$, since $s$ and $r$ (matching events) must be in the same phase, and we have that $s \le_M r'' \procrel^+ s'' \le_M s' \lhd r' \procrel^+ r$ (i.e., all these events between $s$ and $r$ must be part of the same phase).
        \item $s$ in unmatched, and $s' \lhd r'$. As before, note that $s$ and $s'$ cannot be executed by the same process, otherwise $M$ would not even be $\pp$. 
    \end{itemize}
\end{proof}
\end{document}